\newcommand{\argmin}{\mathop{\mathrm{argmin\,}}}
\newcommand{\bolda}{{\boldsymbol{a}}}
\newcommand{\boldb}{{\boldsymbol{b}}}
\newcommand{\boldx}{{\boldsymbol{x}}}
\newtheorem{theorem}{Theorem}
\newtheorem{lemma}[theorem]{Lemma}
\newtheorem{corollary}[theorem]{Corollary}
\theoremstyle{definition}
\newtheorem{hypothesis}[theorem]{Hypothesis}
\newtheorem{problem}[theorem]{Problem}
\newenvironment{prooftn}[1]{%
\proof}{\endproof}
\newenvironment{proofln}[1]{%
\proof}{\endproof}
\title{Fast Unbalanced Optimal Transport on a Tree}
\author{%
  Ryoma Sato$^{1, 2}$ \hspace{1em} Makoto Yamada$^{1, 2}$ \hspace{1em} Hisashi Kashima$^{1, 2}$ \\
  $^1$Kyoto University \hspace{1em} $^2$RIKEN AIP\\
  \texttt{\{r.sato@ml.ist.i, myamada@i, kashima@i\}.kyoto-u.ac.jp}
}
\begin{document}

\maketitle

\begin{abstract}
This study examines the time complexities of the unbalanced optimal transport problems from an algorithmic perspective for the first time. We reveal which problems in unbalanced optimal transport can/cannot be solved efficiently. Specifically, we prove that the Kantorovich Rubinstein distance and optimal partial transport in the Euclidean metric cannot be computed in strongly subquadratic time under the strong exponential time hypothesis. Then, we propose an algorithm that solves a more general unbalanced optimal transport problem exactly in quasi-linear time on a tree metric. The proposed algorithm processes a tree with one million nodes in less than one second. Our analysis forms a foundation for the theoretical study of unbalanced optimal transport algorithms and opens the door to the applications of unbalanced optimal transport to million-scale datasets.
\end{abstract}

\section{Introduction}

The optimal transport (OT) distance is an effective tool to compare measures and is used in a variety of fields. The applications of OT include image processing \cite{ni2009local, rabin2011wasserstein, de2012blue}, natural language processing \cite{kusner2015word, rolet2016fast}, biology \cite{schiebinger2019optimal, lozupone2005unifrac, evans2012phylogenetic}, and generative models \cite{arjovsky2017wasserstein, salimans2018improving}. However, one of the major limitations of OT is that it cannot handle measures with different total mass.

\setlength{\intextsep}{0pt}%
\setlength{\columnsep}{0.1in}%
\begin{wrapfigure}{r}{2.3in} \label{fig: illust}
\vspace{-0.06in}
\centering
\includegraphics[width=2.3in]{./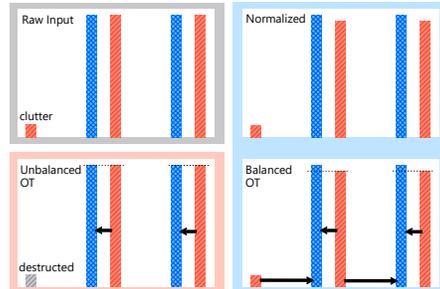}
\vspace{-0.2in}
\caption{Illustrative example.}
\end{wrapfigure}

We illustrate two specific issues here. First, the amount of mass is an important signal, e.g., in shape analysis \cite{solomon2015convolutional, chizat2018interpolating} and comparing persistence diagrams \cite{lacombe2018large, vidal2020progressive}. Secondly, the OT distance is susceptible to noise because we must transport noise mass with high costs if noise occurs far away from other mass. In many cases, the measures are normalized so that they become probabilistic measures for applying the OT distance to them. However, normalization loses the information of the amount of mass and does not solve the noise problem, as illustrated in Figure \ref{fig: illust}, where blue and red bars represent two input measures, and the gray bar represents destructed mass.

To overcome these issues, many variants of the OT distance in the unbalanced setting have been proposed, such as the optimal partial transport \cite{caffarelli2010free, figalli2010optimal} and Kantorovich Rubinstein distance \cite{lellmann2014imaging, guittet2002extended}. We first propose a unified framework to discuss these problems as the generalized Kantorovich Rubinstein (GKR) distance, which encompasses many existing unbalanced OT distances.

\[ \min_{\pi \in \mathcal{U}^-(\mu, \nu)} \int_{\mathcal{X} \times \mathcal{X}} c(x, y) d\pi(x, y) + \int_{\mathcal{X}} \lambda_d(x) d(\mu - \text{proj}_1\pi)(x) + \int_{\mathcal{X}} \lambda_c(y) d(\nu - \text{proj}_2\pi)(y). \]

The formal notations will be provided later. Intuitively, the GKR distance can destruct a unit mass at cost $\lambda_d(x)$ and create a unit mass at cost $\lambda_c(x)$ at $x \in \mathcal{X}$. Therefore, $\lambda_c = \lambda_d = \infty$ recovers the standard OT distance. We investigate the GKR distance in the light of fine-grained complexity \cite{williams2005new, williams2018some}. Fine-grained complexity shows that some problem is not solvable within $O(n^{c - \varepsilon})$ time under some hypothesis, just like NP-hard problems are shown to be not solvable efficiently under the P $\neq$ NP hypothesis. In this paper, we prove that important special cases of the GKR distance, namely the optimal partial transport and Kantorovich Rubinstein distance, are not solvable in strongly subquadratic time under the strong exponential time hypothesis (SETH) \cite{impagliazzo2001complexity, williams2018some}.

Thanks to this theorem, we can avoid fruitless efforts pursuing efficient algorithms for unbalanced OT in the Euclidean space, and this theorem motivates us to consider other spaces. It is known that the OT distance in the Euclidean space can be efficiently approximated by the OT distance in the $1$-dimensional Euclidean space \cite{rabin2011wasserstein, kolouri2016sliced, carriere2017sliced} or tree metrics \cite{indyk2003fast, charikar2002similarity, le2019tree}. We consider the GKR distance on tree metrics in this paper. It is noteworthy that tree metrics include $1$-dimensional space because $1$-dimensional space can be seen as a path graph, which is a special case of a tree. We prove that the GKR distance on a quadtree metric can approximate the GKR distance on the Euclidean metric theoretically and empirically. Although it is easy to compute the OT distance in tree metrics in linear time, it is not trivial how to compute the GKR distance in tree metrics efficiently. In this paper, we propose an efficient algorithm to compute the GKR distance in tree metrics in $O(n \log^2 n)$ time, where $n$ is the number of nodes in a tree. Therefore, our algorithm solves many existing unbalanced OT problems on tree metrics efficiently. In practice, our algorithm processes a tree with more than one million nodes in less than one second on a laptop computer. Our code including a C++ implementation and Python wrapper for our algorithm is available at \url{https://github.com/joisino/treegkr}.

The contributions of this paper are as follows. \textbf{Framework:} We propose the GKR distance, a general framework for unbalanced OT problems. \textbf{Hardness result:} We prove that existing unbalanced OT problems cannot be solved in strongly subquadratic time under SETH. \textbf{Efficient algorithm:} We propose a quasi-linear time algorithm for the GKR distance in tree metrics.

\section{Related Work} \label{sec: related}

One of the major constraints of the OT distance is that it requires the two input measures to have the same total mass. However, handling measures with different mass is crucial in imaging \cite{lellmann2014imaging, chizat2018scaling, chizat2018interpolating}, biology \cite{schiebinger2019optimal}, keypoint matching \cite{sarlin2019super}, text matching \cite{swanson2020rationalizing}, generative models \cite{yang2019scalable}, and transport equations \cite{piccoli2014generalized, piccoli2016properties}. Kantorovich \cite{kantorovich1957functional, kantorovich1942translocation} already extended the OT distance to the unbalanced setting by introducing the waste function in 1957. Benamou \cite{benamou2001mixed, benamou2003numerical} generalized the OT distance to the unbalanced setting via the dynamic formulation of the OT distance \cite{benamou2000computational}. Following this work, many generalizations of the OT distance to the unbalanced setting have been proposed, such as the optimal partial transport \cite{caffarelli2010free, figalli2010optimal}, Wasserstein-Fisher-Rao \cite{liero2016optimal, kondratyev2016new, chizat2018interpolating}, and Kantorovich-Rubinstein distance \cite{lellmann2014imaging, guittet2002extended}. Chizat et al. provided a unified view to the static and dynamic formulations of unbalanced OT \cite{chizat2018unbalanced} and proposed a generalized Sinkhorn algorithm to compute unbalanced OT efficiently \cite{chizat2018scaling}.

Sliced partial optimal transport \cite{bonneel2019spot} is a linear time algorithm for a special case of the optimal partial transport in the $1$-dimensional Euclidean space. Their problem is a very special case of the generalized Kantorovich Rubinstein distance because the GKR distance with $\lambda_c = 0, \lambda_d = \infty$ recovers their case. Moreover, we propose an algorithm for tree metrics, which can handle $1$-dimensional space (i.e., a path graph) as a special case.

Lellmann et al. \cite{lellmann2014imaging} utilized the Kantorovich Rubinstein distance, where the cost of destruction and creation is uniform (i.e., $\lambda_c = \lambda_d = \lambda$ (const.)), for denoising and cartoon-texture decomposition. Uniform destruction costs (i.e., adding  dustbins) are used in keypoint matching \cite{detone2018super, sarlin2019super} and text matching \cite{swanson2020rationalizing} as well to absorb unmatched points. Caffarelli et al. \cite{caffarelli2010free} and Figalli \cite{figalli2010optimal} proposed optimal partial transport to handle unbalanced measures. This metric transports $\kappa \le \min(\|\mu\|_1, \|\nu\|_1)$ mass instead of all mass. As Chizat et al. \cite{chizat2018interpolating} pointed out, this is equivalent to the Kantorovich Rubinstein distance. This indicates that the earth mover's distance (EMD) \cite{rubner2000earth} and Pele's EMD \cite{pele2008linear} are also special cases of GKR (with additional trivial terms or normalization).

Kantorovich considered a variant of the OT distance by allowing mass creation and destruction at the boundary of the domain \cite{kantorovich1957functional}. Figalli et al. \cite{figalli2010new} considered a similar distance and an application to gradient flows with Dirichlet boundary conditions. Their distance is a special case of the GKR distance because GKR with $\lambda_c(x) = \lambda_d(x) = d(x, \partial \mathcal{X})$ recovers their distance, where $d(x, \partial \mathcal{X})$ is the distance from $x$ to the boundary. Lacombe et al. \cite{lacombe2018large} used the unbalanced OT distance for comparing persistent diagrams. They considered the diagonal of persistent diagrams as a sink. Their distance is a special case of the GKR distance because GKR with $\lambda_c(x) = \lambda_d(x) = d(x, \Delta)$ recovers their problem, where $d(x, \Delta)$ is the distance from $x$ to the diagonal.

A network flow-based method \cite{guittet2002extended, orlin1993faster} can compute the GKR distance, but a major limitation of the flow-based method is its scalability. Namely, the flow-based method runs in $O(n^2 \log n)$ time even on tree metrics. In this paper, we propose an efficient algorithm to compute the GKR distance in tree metrics exactly that works in $O(n \log^2 n)$ time in the worst case. Our method can process measures with more than one million elements within one second. 

Some existing works proposed fast computation of \emph{standard} OT \cite{genevay2016stochastic, altschuler2019massively}. For example, Genevay et al. \cite{genevay2016stochastic} reported that their algorithm processed measures with $20000$ points in less than two minutes with Tesla K80 cards. Our algorithm computes \emph{unbalanced} OT distances between two measures with \emph{one million} points in less than \emph{one second} with \emph{one CPU core}, which is much faster than existing algorithms. Besides, it should be noted that although special cases of standard OT, such as OT on trees and Euclidean spaces, can be computed efficiently \cite{altschuler2019massively, le2019tree}, and unbalanced OT can be converted to standard OT \cite{guittet2002extended}, the converted OT problems are not necessarily in Euclidean (resp. tree) spaces even if the original unbalanced OT problems are in Euclidean (resp. tree) spaces. Thus, such methods are not necessarily applicable to unbalanced OT problems directly.

\textbf{Limitation of our framework:} The GKR distance penalizes mass creation and destruction linearly. Thus, the GKR distance does not contain the Wasserstein-Fisher-Rao distance \cite{liero2016optimal, kondratyev2016new, chizat2018interpolating}, which penalizes mass creation and destruction by KL divergence. Extending our results to the Wasserstein-Fisher-Rao distance is an important open problem.

\section{Background}

\textbf{Notations.}
$\mathcal{M}(\mathcal{X})$ denotes the set of measures on measurable space $\mathcal{X}$. When the measurable space $\mathcal{X} = \{x_1, \dots, x_n\}$ is finite, a measure $\mu = \sum_i a_i \delta_{x_i}$ can be represented as a histogram $\bolda = [a_1, \dots, a_n]^\top \in \mathbb{R}_{\ge 0}^n$, where $\delta_x$ is the Dirac mass at $x$. We use a measure and a histogram interchangeably in that case. $\text{proj}_1$ and $\text{proj}_2\colon \mathcal{M}(\mathcal{X} \times \mathcal{X}) \to \mathcal{M}(\mathcal{X})$ are projections to the first and second coordinate, respectively. Specifically, for a measure $\mu \in \mathcal{M}(\mathcal{X} \times \mathcal{X})$ and measurable sets $A, B \subset \mathcal{X}$, $\text{proj}_1 \mu(A) = \mu(A \times \mathcal{X})$ and $\text{proj}_2 \mu(B) = \mu(\mathcal{X} \times B)$. $\mathcal{U}(\mu, \nu) = \{ \pi \in \mathcal{M}(\mathcal{X} \times \mathcal{X}) \mid \text{proj}_1 \pi = \mu, \text{proj}_2 \pi = \nu \}$ denotes the set of coupling measures of $\mu$ and $\nu \in \mathcal{M}(\mathcal{X})$. $\mathcal{U}^-(\mu, \nu) = \{ \pi \in \mathcal{M}(\mathcal{X} \times \mathcal{X}) \mid \text{proj}_1 \pi \le \mu, \text{proj}_2 \pi \le \nu \}$ denotes the set of sub-coupling measures of $\mu$ and $\nu \in \mathcal{M}(\mathcal{X})$. A tree is a connected acyclic graph. In this paper, we consider weighted undirected graphs. Thus, a tree is represented as a tuple $\mathcal{T} = (\mathcal{X}, E, w)$, where $\mathcal{X}$ is a set of nodes, $E \subset \mathcal{X} \times \mathcal{X}$ is a set of edges, and $w\colon E \to \mathbb{R}_{\ge 0}$ is a weight function. Because we consider undirected trees, if $(x, y) \in E$, $(y, x) \in E$ and $w(x, y) = w(y, x)$ hold. $d_\mathcal{T}\colon \mathcal{X} \times \mathcal{X} \to \mathbb{R}_{\ge 0}$ denotes the geodesic distance between two nodes on tree $\mathcal{T}$. Specifically, for node $u, v \in \mathcal{X}$, $d_\mathcal{T}(u, v)$ is the sum of the edge weights in the unique path between $u$ and $v$. For $f, g\colon \mathbb{R} \to \mathbb{R}$, $f = \omega(g)$ denotes $f(x)/g(x) \to \infty$ as $x \to \infty$.

\textbf{Optimal Transport.}
Given two measures $\mu$ and $\nu \in \mathcal{M}(\mathcal{X})$ with the same total mass (i.e., $\|\mu\|_1 = \|\nu\|_1$ ) and a cost function $c\colon \mathcal{X} \times \mathcal{X} \to \mathbb{R}_{\ge 0}$, the OT problem is defined as $\text{OT}(\mu, \nu) = \min_{\pi \in \mathcal{U}(\mu, \nu)} \int_{\mathcal{X} \times \mathcal{X}} c(x, y) d\pi(x, y).$ In particular, when the cost function is the power $d^p$ of a distance $d$, $\text{OT}^{1/p}$ is referred to as the Wasserstein distance. The Wasserstein distance has many applications in machine learning, including document classification \cite{kusner2015word}, comparing label distributions \cite{frogner2015learning}, and generative models \cite{arjovsky2017wasserstein, salimans2018improving}. The OT problem can be solved using a minimum cost flow algorithm \cite{orlin1993faster} exactly or using the Sinkhorn algorithm \cite{cuturi2013sinkhorn} with entropic regularization. One limitation of the OT problem is that it cannot handle measures with different total mass because the set $\mathcal{U}(\mu, \nu)$ of coupling measures is empty in that case. Many extensions of the OT problem have been proposed to deal with ``unbalanced'' measures, as we reviewed in Section \ref{sec: related}. In the following sections, we analyze the time complexities of unbalanced OT problems.

\section{Generalized Kantorovich Rubinstein Distance}

In this section, we propose the generalized Kantorovich Rubinstein (GKR) distance, a generalized problem of unbalanced OT. The GKR distance is defined as follows:

\[ \text{GKR}(\mu, \nu) = \min_{\pi \in \mathcal{U}^-(\mu, \nu)} \int_{\mathcal{X} \times \mathcal{X}} c d\pi + \int_{\mathcal{X}} \lambda_d d(\mu - \text{proj}_1\pi) + \int_{\mathcal{X}} \lambda_c d(\nu - \text{proj}_2\pi), \]

where $\lambda_d$ and $\lambda_c\colon \mathcal{X} \to \mathbb{R}_{\ge 0}$ are destruction and creation cost functions, respectively. Intuitively, the GKR distance does not necessarily transport all the mass but pays penalties for mass creation and destruction. It should be noted that the GKR distances are so general that the metric axioms do not always hold. The triangle inequality holds if (1) $c$ is a metric and (2) $\lambda_d(x) \le c(x, y) + \lambda_d(y)$ and $\lambda_c(y) \le \lambda_c(x) + c(x, y)$ hold for any $x, y \in \mathcal{X}$. We assume condition (2) in the following. Intuitively, this condition says that it is always more beneficial to destruct a mass at $x$ than to transport the mass somewhere and destruct it there. When $c$ is symmetric and $\lambda_d = \lambda_c$, the GKR is also symmetric by its definition. Note that we do assume only condition (2) in the following, thus our propose method can be used for not only special cases of GKR where the metric axioms hold but also non-metric GKR distances. Importantly, the GKR distance includes many popular variants of the OT distance as special cases, as we discussed in Section \ref{sec: related}. namely, the GKR distance encompasses the Kantorovich Rubinstein distance \cite{lellmann2014imaging, guittet2002extended}, optimal partial transport problem \cite{caffarelli2010free, figalli2010optimal}, and Figalli's formulation \cite{figalli2010new, kantorovich1957functional} as special cases.

Measures considered in the machine learning field are often discrete and endowed with the Euclidean metric \cite{kusner2015word, courty2017optimal}. For the time being, we consider that the space is a finite subset of the $d$-dimensional space (i.e., $\mathcal{X} \subset \mathbb{R}^{d}$ and $|\mathcal{X}| = n$), and the cost function is the power of the $L_p$ metric (i.e., $c(x, y) = \|x - y \|_p^p$), and we show that important special cases of the GKR distance cannot be computed efficiently under the following hypothesis.

\begin{hypothesis}[strong exponential time hypothesis (SETH) \cite{impagliazzo2001complexity, williams2018some}]
For any $\delta < 1$, there exists $k \in \mathbb{Z}^+$ such that the $k$-SAT problem with $n$ variables cannot be solved in $O(2^{\delta n})$ time even by a randomized algorithm.
\end{hypothesis}

This hypothesis has been used to prove that some problems, such as graph diameter \cite{roditty2013fast, chechik2014better} and edit distance \cite{backurs2015edit}, are not solvable efficiently and has been supported by, e.g., \cite{abboud2018more, chen2019equivalence}. We show that unbalanced optimal transport problems cannot be solved efficiently under this hypothesis.

\begin{theorem} \label{thm: hardness}
If SETH is true, for any $p \ge 1$ and $\varepsilon > 0$, neither the Kantorovich Rubinstein distance nor optimal partial transport problem, where $\mathcal{X} \subset \mathbb{R}^{d}$, $c(x, y) = \|x - y \|_p^p$, and $d = \omega(\log n)$, can be solved in $O(n^{2 - \varepsilon})$ time.
\end{theorem}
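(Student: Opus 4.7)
The plan is to reduce the Orthogonal Vectors (OV) problem, which is SETH-hard in $\Omega(n^{2-\varepsilon})$ time whenever its dimension grows as $\omega(\log n)$, to both the Kantorovich-Rubinstein distance and the optimal partial transport problem. Starting from an OV instance $A, B \subset \{0,1\}^{d_0}$ of size $n$ with $d_0 = \Theta(\log n)$, I would first normalize all Hamming weights by the standard disjoint-padding trick: replace $a \in A$ by $a' := (a, 1^{d_0-|a|}, 0^{d_0})$ and $b \in B$ by $b' := (b, 0^{d_0}, 1^{d_0-|b|})$, lifting to $\{0,1\}^{3d_0}$ with all weights equal to $d_0$ and preserving $\langle a', b'\rangle = \langle a, b\rangle$ because the padded regions are disjoint. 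Replacing each $b'$ by its bitwise complement $\bar b'$ and computing then yields
\begin{equation*}
\|a' - \bar b'\|_1 = |a'| + |\bar b'| - 2\langle a', \bar b'\rangle = d_0 + 2\langle a, b\rangle,
\end{equation*}
so this Hamming distance equals $d_0$ exactly when the pair is orthogonal and is at least $d_0 + 2$ otherwise. Appending zero coordinates reaches any target dimension $d = \omega(\log n)$ without altering distances. Because all vectors are $\{0,1\}$-valued, $c(x,y) = \|x-y\|_p^p = \|x-y\|_1$ for every $p \ge 1$, so the same construction handles all cost exponents simultaneously.

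For the optimal partial transport reduction, set $\mu = \sum_{i=1}^n \delta_{a'_i}$, $\nu = \sum_{j=1}^n \delta_{\bar b'_j}$, and mass budget $\kappa = 1$. The resulting linear program, minimizing $\sum_{i,j} \pi_{ij}\, c(a'_i, \bar b'_j)$ subject to $\pi_{ij} \ge 0$, $\sum_j \pi_{ij} \le 1$, $\sum_i \pi_{ij} \le 1$, and $\sum_{i,j} \pi_{ij} = 1$, attains its optimum by placing the single unit of mass entirely on the cheapest pair, so $\mathrm{OPT}(\mu,\nu;1) = \min_{i,j} c(a'_i, \bar b'_j)$, which equals $d_0$ iff OV has a witness and is $\ge d_0 + 2$ otherwise. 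A hypothetical $O(n^{2-\varepsilon})$ solver for OPT would therefore decide OV in $O(n^{2-\varepsilon})$ time after $O(nd)$ preprocessing, contradicting SETH.

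For the Kantorovich-Rubinstein distance, I keep the same measures and set the uniform destruction/creation cost $\lambda = (d_0 + 1)/2$. Using $\|\mu - \text{proj}_1 \pi\|_1 + \|\nu - \text{proj}_2 \pi\|_1 = \|\mu\|_1 + \|\nu\|_1 - 2\|\pi\|_1$, the KR objective rewrites as
\begin{equation*}
\mathrm{KR}(\mu,\nu) = 2\lambda n - \max_{\pi \in \calU^-(\mu,\nu)} \sum_{i,j} \pi_{ij}\,\bigl(2\lambda - c(a'_i, \bar b'_j)\bigr),
\end{equation*}
so KR equals $2\lambda n$ minus the maximum weight of a partial bipartite matching whose edges carry a strictly positive per-unit saving $2\lambda - c_{ij}$. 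By the threshold choice, $2\lambda - c_{ij} > 0$ iff $c_{ij} \le d_0$, iff $(a_i,b_j)$ is orthogonal, and every such saving is an even integer at least $1$. Hence $\mathrm{KR}(\mu,\nu) = 2\lambda n$ when OV has no witness and $\mathrm{KR}(\mu,\nu) \le 2\lambda n - 1$ otherwise, so a subquadratic-time KR solver would again violate SETH.

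The individual pieces of each reduction are short; the real care is in threading them together so that the output of the chain ``OV $\to$ uniform-weight vectors $\to$ complement-based Hamming instance $\to$ OPT/KR'' keeps the number of supporting points at $\Theta(n)$ and the ambient dimension at the required $\omega(\log n)$, and in choosing $\lambda$ so that the yes/no gap for KR is bounded away from zero by an absolute constant (rather than vanishing with $n$) without needing any binary search over $\lambda$. Once those constants are pinned down, both hardness claims follow from the LP vertex-optimum argument for OPT and the matching-weight rewriting for KR.
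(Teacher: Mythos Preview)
Your proposal is correct and differs from the paper's proof in two noteworthy ways.

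The paper reduces from the Bichromatic Hamming Close Pair (BHCP) problem, citing Alman--Williams for its SETH-hardness in dimension $\omega(\log n)$, whereas you reduce directly from Orthogonal Vectors and manufacture the distance gap yourself via the weight-equalising padding and the complement step. For optimal partial transport with $\kappa=1$ the two arguments then coincide: both simply observe that the optimum concentrates on the cheapest pair and thus returns $\min_{i,j} c_{ij}$. For the Kantorovich--Rubinstein distance the paper shows $\mathrm{KR}(\mu,\nu)=2\lambda n$ iff $2\lambda \le \delta := \min_{i,j} c_{ij}$ and then \emph{binary-searches} over $\lambda$ (an integer search over $\{0,\tfrac12,1,\dots\}$, incurring an extra $O(\log n)$ factor) to recover $\delta$. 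Your rewriting $\mathrm{KR}=2\lambda n-\max_\pi\sum_{ij}\pi_{ij}(2\lambda-c_{ij})$ together with the single choice $\lambda=(d_0+1)/2$ exploits the gap $c_{ij}\in\{d_0\}\cup\{d_0+2,d_0+4,\dots\}$ that your padded instance creates, so one KR evaluation already decides OV. This is a small but genuine simplification over the paper's argument; the cost is that you have to carry out the OV-to-gapped-Hamming padding explicitly, which the paper offloads to the BHCP black box.

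Two cosmetic quibbles. First, your padding notation $(a,1^{d_0-|a|},0^{d_0})$ literally has length $3d_0-|a|$, which varies with $a$; you presumably mean $(a,\bar a,0^{d_0})$ or $(a,1^{d_0-|a|}0^{|a|},0^{d_0})$ so that all vectors live in $\{0,1\}^{3d_0}$ as you claim. Second, for an orthogonal pair the per-unit saving $2\lambda-c_{ij}$ equals $1$, not an even integer; but ``at least $1$'' is exactly what your yes/no gap needs, so the argument is unaffected.
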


This theorem is proved by reduction from the bichromatic Hamming close pair problem \cite{alman2015probabilistic}. All proofs are provided in the appendices. As far as we know, fine grained complexity has not yet been explored in the machine learning literature, except empirical risk minimization \cite{backurs2017fine}. Our result demonstrates that the concept of fine grained complexity is a useful tool to derive hardness results in machine learning. From this theorem, it seems impossible to apply unbalanced OT to million-scale datasets in the Euclidean space. This fact motivates us to consider easier metrics.
Previous works proved that OT in the Euclidean space can be effectively approximated by OT on a tree metric via theoretical arguments \cite{charikar2002similarity, indyk2003fast} and empirical studies \cite{indyk2003fast, le2019tree}. In the quadtree approximation, the ground cost between two nodes is defined as the distance between the centers of regions. It should be noted that the quadtree and its theory are not limited to two dimensions, though  too high dimensions are prohibited due to its scalability. We extend the results of the quadtree OT approximation by Indyk and Thaper \cite{indyk2003fast} to the tree GKR approximation.

\begin{theorem} \label{thm: tree_approx}
Let $\text{GKR}_{\text{euc}}$ be the GKR distance with Euclidean cost $c_{\text{euc}}(x, y) = \|x - y\|_2$. Let $\text{GKR}_{\text{tree}}$ be the GKR distance with quadtree cost $c_{\text{tree}}(x, y) = d_\mathcal{T}(x, y)$, where $\mathcal{T}$ is a quadtree. There exists a constant $C$ such that for any measures $\mu$ and $\nu$, $\text{GKR}_{\text{euc}}(\mu, \nu) \le C \cdot \text{GKR}_{\text{tree}}(\mu, \nu)$ holds. If we randomly translate measures when we construct a quadtree, there exists a constant $D$ such that $\mathbb{E}_\mathcal{T}[\text{GKR}_{\text{tree}}(\mu, \nu)] \le D \cdot \text{GKR}_{\text{euc}}(\mu, \nu) \log \Delta$, where $\Delta$ is the spread (i.e., the ratio of the farthest distance to the closest distance), and the expectation is taken by random translation of the measures.
\end{theorem}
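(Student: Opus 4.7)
The plan is to adapt the Indyk--Thaper quadtree analysis for standard OT to the unbalanced setting, exploiting the fact that only the transport term of $\text{GKR}$ depends on the ground cost $c$, while the admissible set $\mathcal{U}^-(\mu,\nu)$ and the penalties $\lambda_c, \lambda_d$ are cost-independent. The argument then reduces to a pointwise comparison between $c_{\text{euc}}$ and $c_{\text{tree}}$, applied to a single sub-coupling transplanted between the two problems.

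I would first record two pointwise properties of the quadtree metric that come directly from the Indyk--Thaper construction. The edge weights are chosen so that deterministically $c_{\text{euc}}(x,y) \le C \cdot c_{\text{tree}}(x,y)$ for all $x, y$ in the support, with $C$ depending only on the ambient dimension; and with a uniformly random translation of the root cell, $\mathbb{E}_\mathcal{T}[c_{\text{tree}}(x,y)] \le D \cdot c_{\text{euc}}(x,y) \log \Delta$ for a dimension-dependent constant $D$. These two inequalities contain all the geometric content of the proof.

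For the deterministic bound, I would let $\pi^\star$ achieve $\text{GKR}_{\text{tree}}(\mu,\nu)$; since $\pi^\star \in \mathcal{U}^-(\mu,\nu)$, it is admissible for $\text{GKR}_{\text{euc}}(\mu,\nu)$. Plugging $\pi^\star$ into the Euclidean objective, the transport term is bounded by $C \int c_{\text{tree}} \, d\pi^\star$ via the pointwise inequality, while the two penalty integrals are literally identical to those evaluated at $\pi^\star$ under the tree cost. Taking $C \ge 1$ without loss of generality so that it absorbs the two cost-independent penalty terms yields $\text{GKR}_{\text{euc}}(\mu,\nu) \le C \cdot \text{GKR}_{\text{tree}}(\mu,\nu)$. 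For the expectation bound, I would dually let $\pi^\star$ achieve $\text{GKR}_{\text{euc}}(\mu,\nu)$; it is again a valid sub-coupling for every realization of the random quadtree, so $\text{GKR}_{\text{tree}}(\mu,\nu)$ is almost surely upper bounded by the tree objective at $\pi^\star$. Taking expectations and swapping expectation with the transport integral by Tonelli gives $\mathbb{E}_\mathcal{T}[\int c_{\text{tree}} \, d\pi^\star] \le D \log \Delta \cdot \int c_{\text{euc}} \, d\pi^\star$, while the deterministic penalty terms coincide with those in the Euclidean optimum. Absorbing the penalties via $D \log \Delta \ge 1$ yields the claimed $\mathbb{E}_\mathcal{T}[\text{GKR}_{\text{tree}}(\mu,\nu)] \le D \log \Delta \cdot \text{GKR}_{\text{euc}}(\mu,\nu)$.

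The main technical step is importing the Indyk--Thaper quadtree distortion bounds in the pointwise form stated above; once those are in hand, extending them to $\text{GKR}$ is essentially bookkeeping. The key conceptual observation, which I would highlight in the write-up, is that transplanting the optimal sub-coupling of one problem into the other leaves the creation and destruction terms untouched, so only the transport term ever requires a geometric argument. A minor subtlety to handle cleanly is that $\mathcal{X}$ and the penalty functions $\lambda_c, \lambda_d$ should be interpreted identically in both problems (e.g., as functions on the leaves of the quadtree), which I would make explicit at the start of the proof.
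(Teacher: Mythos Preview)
Your proposal is correct and follows essentially the same approach as the paper: transplant the optimal sub-coupling from one problem into the other, bound only the transport term via the Indyk--Thaper quadtree distortion, and absorb the unchanged penalty terms by enlarging the constant to be at least $1$ (the paper takes $C=\max(1,C_{\text{OT}})$ and $D=\max(1/\log\Delta,D_{\text{OT}})$). The only cosmetic difference is that you invoke the pointwise cost inequalities $c_{\text{euc}}\le C\,c_{\text{tree}}$ and $\mathbb{E}_\mathcal{T}[c_{\text{tree}}]\le D\log\Delta\,c_{\text{euc}}$ directly, whereas the paper quotes the Indyk--Thaper lemma at the OT level and rewrites the transport integral as $\text{OT}(\text{proj}_1\pi^\star,\text{proj}_2\pi^\star)$ before applying it.
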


Note that $C$ and $D$ are dependent on $d$ but independent of $n$. This theorem indicates that the GKR distance on the Euclidean metric can be approximated by the GKR distance on a quadtree metric. Besides, tree OT includes the sliced OT \cite{rabin2011wasserstein, kolouri2016sliced, carriere2017sliced}, which was shown to be an effective approach to scale up the standard OT. In addition to that, OT on a tree metric is interesting in its own right. For example, UniFrac \cite{lozupone2005unifrac} uses OT on a phylogenetic tree to compare microbial communities. In the following, we consider GKR on a tree metric. Formally, the problem of the GKR distance on a tree metric can be formalized as follows.

\begin{problem}[GKR distance on tree metrics] \label{prob: GKR_tree}
\textbf{Input:} A tree $\mathcal{T} = (\mathcal{X}, E, w)$ with $n = |\mathcal{X}|$ nodes, mass destruction and creation functions $\lambda_d, \lambda_c\colon \mathcal{X} \to \mathbb{R}_{\ge 0}$, and two measures $\bolda, \boldb \in \mathbb{R}_{\ge 0}^{\mathcal{X}}$ on tree $\mathcal{T}$. \textbf{Output:} The GKR distance $\text{GKR}(\bolda, \boldb)$ with cost $c(x, y) = d_\mathcal{T}(x, y)$.
\end{problem}

Note that the GKR distance on a finite space can be formulated as follows.

\[ \text{GKR}(\bolda, \boldb) = \min_{\pi \in \mathcal{U}^-(\bolda, \boldb)} \sum_{x, y \in \mathcal{X}} c(x, y) \pi_{x, y} + \sum_{x \in \mathcal{X}} \lambda_d(x) (\bolda - \text{proj}_1\pi)_x + \sum_{y \in \mathcal{X}} \lambda_c(y) (\boldb - \text{proj}_2\pi)_y, \] 

where $(\text{proj}_1 \pi)_i = \sum_j \pi_{ij}$ and $(\text{proj}_2 \pi)_j = \sum_i \pi_{ij}$ are projections and $\mathcal{U}^-(\bolda, \boldb) = \{ \pi \in \mathbb{R}_{\ge 0}^{\mathcal{X} \times \mathcal{X}} \mid (\text{proj}_1 \pi)_x \le \bolda_x, (\text{proj}_2 \pi)_y \le \boldb_y\}$ is the set of sub-couplings. This problem can be solved by a minimum cost flow algorithm, as Guittet \cite{guittet2002extended} pointed out for the Kantorovich Rubinstein distance. Specifically, the space is extended with a virtual point, and each point is connected to this point with costs of mass destruction and creation. Because there are $O(n)$ edges, this problem can be solved in $O(n^2 \log n)$ time by Orlin's algorithm \cite{orlin1993faster}. The Sinkhorn algorithm \cite{cuturi2013sinkhorn} is an alternative approach to solve this problem in $O(n^2)$ time by introducing entropic regularization. However, these algorithms are too slow for large datasets. In the following section, we propose a more efficient algorithm for this problem.

\section{Fast Computation of GKR on Tree Metrics}

In this section, we propose an efficient algorithm for the GKR distance on tree metrics based on dynamic programming and speed up the computation using fast convex min-sum convolution, efficient data structures, and weighted-union heuristics\footnote{A similar technique is known in the competitive programming community \url{https://icpc.kattis.com/problems/conquertheworld}}. For the algorithm description, we arbitrarily choose a root node $r \in \mathcal{X}$. Further, without loss of generality, we assume that the input is a binary tree, only leaf nodes have mass (i.e., $\bolda_x = \boldb_x = 0$ for all internal node $x$), and we do not create or destruct mass in internal nodes to simplify the discussion (see Appendix \ref{sec: preprocessing}). 

\textbf{Summary.} The dynamic programming computes the GKR distance from leaf to root recursively. Figure \ref{fig: naive} shows examples. Note that when we compute the transport in a parent node, the optimal assignments in the children subtrees are already computed recursively. When two children are merged, the dynamic programming determines the optimal transition (i.e., the optimal amount of mass that are transported to the left and right children). The naive implementation searches the transition of minimum cost from the all possibilities (i.e., the min operator in Eq. (\ref{eq: rec_t})), which leads to a slow computation. The proposed fast convolution algorithm speeds up this merge operation.

\begin{figure}[tb] \label{fig: naive}
\centering
\includegraphics[width=0.8\linewidth]{./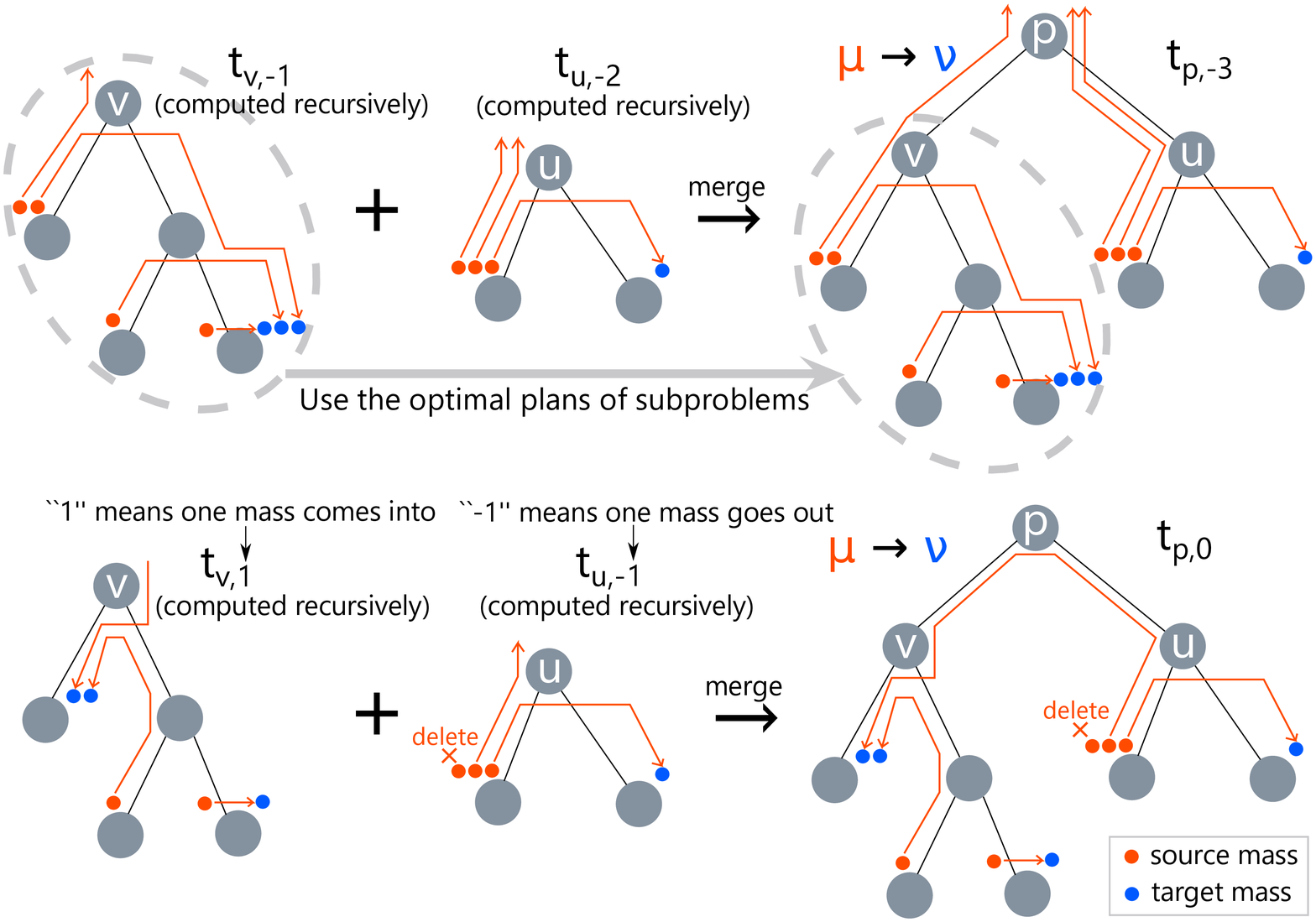}
\caption{\textbf{Illustration of the DP computation.} The dynamic programming computes assignments recursively from leaf to root.}
\vspace{-0.2in}
\end{figure}

\setlength{\intextsep}{0pt}%
\setlength{\columnsep}{0.1in}%
\begin{wrapfigure}{r}{0.8in} \label{fig: tree}
\vspace{-0.1in}
\centering
\includegraphics[width=0.8in]{./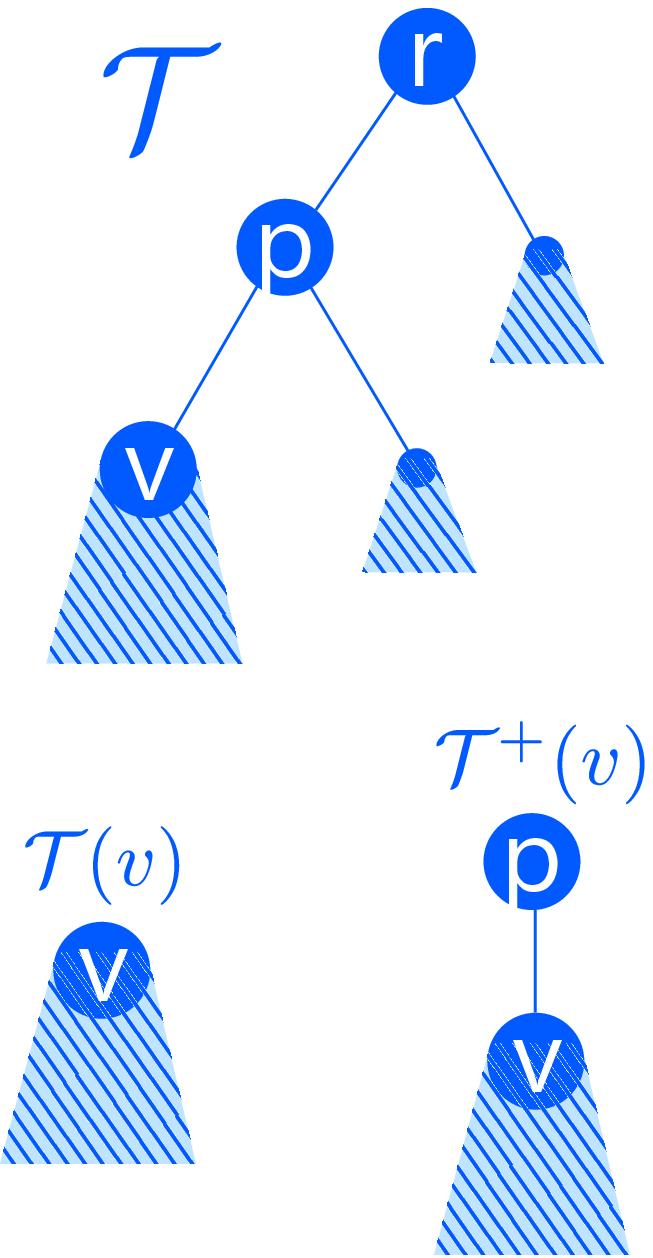}
\vspace{-0.2in}
\end{wrapfigure}

\textbf{Notations for algorithm description.} For a rooted tree $\mathcal{T} = (\mathcal{X}, E, w)$ and $v \in \mathcal{X}$, let $\mathcal{T}(v) = (\mathcal{X}_v, E_v, w)$ be the subtree of node $v$. Let $p(v) \in \mathcal{X}$ be the parent node of $v \in \mathcal{X}$. For a non-root node $v \in \mathcal{X}$, let $\mathcal{T}^+(v) = (\mathcal{X}_v \cup \{p(v)\}, E_v \cup \{(v, p(v)), (p(v), v) \}, w)$ be the extended subtree of node $v$.  For a subtree $\mathcal{T}' = (\mathcal{X}', E', w)$ and measure $\bolda \in \mathbb{R}_{\ge 0}^{\mathcal{X}}$ on tree $\mathcal{T}$, let $\left.\bolda\right|_{\mathcal{T}'} \in \mathbb{R}_{\ge 0}^{\mathcal{X}'}$ be the restriction of $\bolda$ to subtree $\mathcal{T}'$ (i.e., $\left.\bolda\right|_{\mathcal{T}', v} = \bolda_v$ for $v \in \mathcal{X}'$). For $x \in \mathbb{R}$, let $[x]_+ = \max(0, x)$. For real-valued functions $f$ and $g\colon \mathbb{R} \to \mathbb{R}$, let $f * g\colon \mathbb{R} \to \mathbb{R}$ be the min-sum convolution of $f$ and $g$, i.e., $(f * g)(x) = \min_z f(x - z) + g(z)$.

\textbf{Naive dynamic programming.} 
For each $v \in \mathcal{X}, x \in \mathbb{R}$, we consider the following two states:
\begin{align*}
&t_{v, x} = \text{GKR}(\left.\bolda\right|_{\mathcal{T}(v)} + [x]_+ \cdot \delta_{v}, \left.\boldb\right|_{\mathcal{T}(v)} + [-x]_+ \cdot \delta_{v}), \\
&e_{v, x} = \text{GKR}(\left.\bolda\right|_{\mathcal{T}^+(v)} + [x]_+ \cdot \delta_{p(v)}, \left.\boldb\right|_{\mathcal{T}^+(v)} + [-x]_+ \cdot \delta_{p(v)}).
\end{align*}
Intuitively, $t_{v, x}$ and $e_{v, x}$ are restrictions of the GKR distance to subtrees $\mathcal{T}(v)$ and $\mathcal{T}^+(v)$, respectively, with additional $x$ mass on $v$ and $p(v)$, respectively. Therefore, the answer we want is $\text{GKR}(\bolda, \boldb) = t_{r, 0}$. We explain how to compute these values recursively. \textbf{Initial value of $t_v$:} In a leaf node $v$, the only thing we can do is create and destroy mass to balance the source and target mass at $v$. Therefore, the initial states in a leaf node $v$ are computed as follows:
\begin{align} \label{eq: basecase}
    t_{v, x} &= \begin{cases} (\boldb_v - \bolda_v - x) \lambda_c(v) & (\boldb_v - \bolda_v - x \ge 0) \\
    (\bolda_v + x - \boldb_v) \lambda_d(v) & (\text{otherwise})
    \end{cases}
\end{align}
\textbf{Recursive equation of $e_v$:} For each non-root node $v$, the only thing we can do when we extend the subtree is transporting all mass on $p(v)$ to $v$ or $v$ to $p(v)$. Transporting $x$ mass costs $|x| \cdot w(v, p(v))$. \textbf{Recursive equation of $t_v$:} For each internal node $p$ with children $v$ and $u$, when we merge two extended subtrees $\mathcal{T}'(v)$ and $\mathcal{T}'(u)$, the mass on $p$ are distributed to two extended subtrees so that the total cost is minimized. We search the distributed mass $y$ to child $u$ naively. Therefore, the following recursive equations hold:
\begin{align}
    e_{v, x} &= |x| \cdot w(v, p(v)) + t_{v, x}, \label{eq: rec_e} \\
    t_{p, x} &= \min_y e_{v, x-y} + e_{u, y} = e_v * e_u. \label{eq: rec_t}
\end{align}
However, it is impossible to execute this dynamic programming because there are infinitely many states. In the following, we propose a method to make the number of states finite and speed up the computation of this dynamic programming.

\textbf{Speeding up dynamic programming.} The most important insight for speeding up the computation is that $t_{v}$ and $e_{v}$ are convex piece-wise linear functions.

\begin{lemma} \label{lemma: piecewise}
$t_{v}$ and $e_{v}$ are convex and piece-wise linear functions with $O(|\mathcal{X}_v|)$ segments.
\end{lemma}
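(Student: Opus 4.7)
I would prove the lemma by structural induction on the subtree $\mathcal{T}(v)$, carrying along the invariant that $t_v$ is convex and piecewise linear with at most $2|\mathcal{X}_v|$ segments (so that after the extension step $e_v$ has at most $2|\mathcal{X}_v|+1$ segments). The base case is a leaf node $v$, for which equation (\ref{eq: basecase}) gives $t_v$ explicitly: on $x \ge \boldb_v-\bolda_v$ it has slope $\lambda_d(v) \ge 0$ and on $x \le \boldb_v-\bolda_v$ it has slope $-\lambda_c(v) \le 0$. Since the slope is non-decreasing, $t_v$ is convex piecewise linear with $2$ segments, and $|\mathcal{X}_v|=1$, confirming the base case.

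For the inductive step, I would handle the two recursions (\ref{eq: rec_e}) and (\ref{eq: rec_t}) separately. For (\ref{eq: rec_e}), adding the convex function $x \mapsto |x|\,w(v,p(v))$ to $t_v$ preserves convexity; the breakpoints of the sum are the union of the breakpoints of the summands, so $e_v$ has at most one more segment than $t_v$ (the extra breakpoint at $0$). Thus if $t_v$ has at most $2|\mathcal{X}_v|$ segments, $e_v$ has at most $2|\mathcal{X}_v|+1$. For (\ref{eq: rec_t}), I would use the following standard fact about the infimal (min-sum) convolution of convex functions: if $f,g\colon \mathbb{R}\to\mathbb{R}$ are convex piecewise linear with slope sequences $s_1 \le \cdots \le s_k$ and $s'_1 \le \cdots \le s'_\ell$, then $f * g$ is convex and piecewise linear and its slope sequence is the sorted merge of $(s_i)$ and $(s'_j)$, hence has at most $k+\ell$ segments. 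Convexity follows because the infimal convolution of convex functions is convex; the explicit merged-slope description can be obtained by writing each piecewise-linear convex function as a sum of shifted $[x-a]_+$ terms and verifying that $[x-a]_+ * [x-b]_+ = [x-a-b]_+$ up to constants, or, equivalently, by noting that in Legendre conjugation min-sum convolution becomes addition, after which the result is immediate.

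Applying this to $t_p = e_v * e_u$ at an internal node $p$ with children $u,v$, we get that $t_p$ is convex piecewise linear with at most $(2|\mathcal{X}_v|+1)+(2|\mathcal{X}_u|+1)$ segments. Since $|\mathcal{X}_p| \ge |\mathcal{X}_v|+|\mathcal{X}_u|+1$ (counting $p$ itself), this is at most $2|\mathcal{X}_p|$, closing the induction and yielding the $O(|\mathcal{X}_v|)$ bound for all $v$. The remaining case of an internal node with a single child is essentially the same as the extension step (\ref{eq: rec_e}) and preserves the bound trivially, modulo the paper's simplifying reduction to binary trees.

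\textbf{Main obstacle.} The routine bookkeeping on slope counts is easy; the crux is the clean statement that min-sum convolution of convex piecewise linear functions merges slopes. I would either cite the standard Legendre-duality derivation (min-sum convolution is dual to pointwise addition, so slopes---the support of the Legendre dual---merge) or prove it directly by a short argument: for convex $e_v,e_u$, the minimizer in $\min_y e_v(x-y)+e_u(y)$ is characterized by the subgradient condition $e_v'(x-y) = e_u'(y)$, so as $x$ increases from $-\infty$ to $+\infty$ the optimal split traces out a monotone path through the breakpoints of $e_v$ and $e_u$, producing exactly the sorted-merge slope sequence for $e_v * e_u$. This monotone sweep, which will also drive the efficient algorithmic implementation later, is the essential fact that makes the whole scheme work.
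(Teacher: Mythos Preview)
Your proof is correct and follows essentially the same structural induction as the paper's own proof, which also uses the base case (\ref{eq: basecase}) together with the recursions (\ref{eq: rec_e}) and (\ref{eq: rec_t}) to propagate convexity and the segment bound. You even obtain a slightly tighter constant ($2|\mathcal{X}_v|$ segments for $t_v$ versus the paper's $3|\mathcal{X}_v|$), and you flesh out the min-sum convolution fact that the paper simply cites as Lemma~\ref{thm: convex_min_sum_conv}.
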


Intuitively, they are convex because nearby sinks are filled, and it costs more to transport extra mass as the amount of mass increases. To manage convex piece-wise linear functions efficiently, we represent each function by a sequence of slopes and lengths of segments, or equivalently, by the run-length representation of the convex conjugate function. Specifically, we represent a convex piecewise function $g$ as a tuple of $m(g) = \min_x g(x)$, $b(g) = \argmin_x g(x) = \inf \{x \mid g(x) = m(g)\}$, and $B(g) = \{B(g)_i\}_{i = 1, 2, \dots, n(g)}$, where $B(g)_i = (s(g)_i, l(g)_i)$ is a pair of slope $s(g)_i$ and length $l(g)_i$ of the $i$-th segment. It is known that min-sum convolution of convex functions is easy to compute in convex conjugate form.

\begin{lemma}[convex min-sum convolution \cite{tseng1996computing}] \label{thm: convex_min_sum_conv}
Let $f$ and $g$ be arbitrary convex piecewise functions. Then, $m(f * g) = m(f) + m(g)$, $b(f * g) = b(f) + b(g)$, and $B(f * g) = \textup{sorted}(B(f) \| B(g))$, i.e., $B(f * g)$ is obtained by lining up elements of $B(f)$ and $B(g)$ in ascending order of slopes.
\end{lemma}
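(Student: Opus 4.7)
The plan is to use the Legendre--Fenchel duality between infimal convolution and pointwise sum of convex conjugates: for closed convex $f$ and $g$, $(f * g)^* = f^* + g^*$, where $h^*(p) = \sup_x (p x - h(x))$. Since closed convex piecewise-linear functions are self-conjugate under double conjugation, the problem reduces to (i) computing the convex conjugate of a piecewise-linear convex function, (ii) adding two such conjugates pointwise, and (iii) reading off the slopes and lengths of the resulting conjugate to recover $f * g$.

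The key step is the structural fact that if $h$ is convex piecewise-linear with ordered segment slopes $s_1 < s_2 < \cdots < s_n$ and corresponding lengths $l_1, \ldots, l_n$, then $h^*$ is convex piecewise-linear on $[s_1, s_n]$ with breakpoints at each interior $s_i$, and at each breakpoint $p = s_i$ the slope of $h^*$ jumps by exactly $l_i$; concretely, the slope of $h^*$ on $(s_i, s_{i+1})$ equals the location of the $i$-th breakpoint of $h$, and the difference between successive such locations is precisely $l_i$. A direct computation from the definition of $h^*$ yields this. Applying this to both $f$ and $g$ and adding the conjugates pointwise produces a piecewise-linear convex function whose breakpoints are the union of slopes of $f$ and $g$, with slope jump at each breakpoint inherited from whichever summand contributed it. Conjugating once more yields $f * g$, whose segments, ordered by increasing slope, are exactly the sorted concatenation $\textup{sorted}(B(f) \| B(g))$.

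For the minimum and argmin identities, the bound $(f * g)(x) \ge m(f) + m(g)$ is immediate from $f(x - z) + g(z) \ge m(f) + m(g)$ for any decomposition, with equality attained when $x - z = b(f)$ and $z = b(g)$; hence $m(f * g) = m(f) + m(g)$, and using the leftmost-minimizer convention defining $b$ gives $b(f * g) = b(f) + b(g)$.

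The main obstacle is the careful bookkeeping in the conjugate computation: handling the endpoint behaviour when the first or last segment is infinite (so that $h^*$ has the correct effective domain), and reconciling the leftmost-argmin convention used to define $b$ with the shift-invariance inherent in conjugation. Both would be streamlined by first reducing to the normalized case $m(f) = m(g) = 0$, $b(f) = b(g) = 0$ via translation, so that only the slope/length structure remains to verify.
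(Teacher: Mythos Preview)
Your approach is correct. Note, however, that the paper does not supply its own proof of this lemma: it is stated with a citation to \cite{tseng1996computing} and used as a black box, so there is no in-paper argument to compare against.

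Your route via Legendre--Fenchel duality is the cleanest conceptual one: the identity $(f*g)^* = f^* + g^*$ together with the slope/breakpoint swap under conjugation of piecewise-linear convex functions does all the work, and your direct argument for $m(f*g)$ and $b(f*g)$ is the right way to handle those (the conjugate picture only gives them up to an affine shift). One small point worth making explicit: when a slope value is shared between $f$ and $g$, the slope jump in $f^*+g^*$ at that breakpoint is the \emph{sum} of the two lengths, which after conjugating back appears as two consecutive segments of equal slope --- i.e., one segment whose length is the sum. This is consistent with the ``sorted concatenation'' description but deserves a sentence, since the representation $B(\cdot)$ does not automatically merge equal-slope segments.

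A more elementary alternative (likely closer in spirit to the cited reference) avoids conjugates entirely: use the first-order optimality condition $g'(z^\star) \in \partial f(x-z^\star)$ to see that as $x$ increases from $-\infty$, the optimal split advances along whichever of $f$ or $g$ currently has the smaller marginal slope, so the slope of $f*g$ at $x$ is $\min$ of the two current slopes and the segments of $f*g$ are exactly a merge of the two sorted slope lists. Your duality argument is shorter to state but requires comfort with the polyhedral case of Fenchel biconjugation; the merge argument is longer but entirely self-contained.
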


In this form, for each leaf $v$, the initial state is $m(t_v) = 0$, $b(t_v) = \boldb_v - \bolda_v$, and $B(t_v) = \{(-\lambda_c(v), \infty), (\lambda_d(v), \infty)\}$ from Eq. (\ref{eq: basecase}). From Eq. (\ref{eq: rec_e}), extending a subtree decreases the slopes by $w(v, p(v))$ where $x < 0$ and increases them by $w(v, p(v))$ where $x \ge 0$ and changes $m(e_v)$ and $b(e_v)$ accordingly. From Eq. (\ref{eq: rec_t}) and Lemma \ref{thm: convex_min_sum_conv}, merging two extended subtrees sorts $B(f_v)$ and $B(f_u)$ in increasing order of slopes, and $m(t_p)$ and $b(t_p)$ are the summations of the counterparts. Thanks to this re-formulation, the number of states is finite and the dynamic programming can be computed in $O(n^2 \log n)$ time because each node $v$ manipulates and sorts arrays of size $O(|\mathcal{X}_v|)$. However, this complexity is still unsatisfactory.

To speed up the computation, we use an efficient data structure. Specifically, we use a balanced binary tree, such as a red black tree, with additional information in each node \cite[\S 14]{cormen2009introduction} to manage a set of segments in increasing order of slopes, or equivalently, in the increasing order of positions because the functions are convex. Eq. (\ref{eq: rec_e}) can be computed in $O(\log m)$ time, where $m$ is the number of segments in a set, by lazy propagation. Eq. (\ref{eq: rec_e}) can be computed in $O(\log m)$ time by inserting two segments into the balanced binary tree. The only obstacle is Eq. (\ref{eq: rec_t}), where merging two sets may take $O(|\mathcal{T}_v| + |\mathcal{T}_u|)$ time. This problem can be solved by weighted-union heuristics \cite{cormen2009introduction}. Specifically, if the smaller set is always merged into the larger set, the total number of operations needed is $O(n \log n)$ in total. Because each insert operation requires $O(\log n)$ time, the total complexity is $O(n \log^2 n)$. Algorithm \ref{algo} describes the pseudo code of the algorithm.

\begin{theorem} \label{thm: quasilinear}
Problem \ref{prob: GKR_tree} can be solved exactly in $O(n \log^2 n)$ time.
\end{theorem}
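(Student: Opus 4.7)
The plan is to establish correctness of the recursion (\ref{eq: basecase})--(\ref{eq: rec_t}), recast it in the convex conjugate representation supplied by Lemma \ref{thm: convex_min_sum_conv}, and then implement each operation in polylogarithmic amortised time using a balanced binary search tree with lazy tags together with the small-to-large (weighted-union) merge rule.

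For correctness, the leaf equation (\ref{eq: basecase}) is immediate because at a leaf the only admissible operation is to annihilate the signed residual $\boldb_v - \bolda_v - x$ at unit cost $\lambda_c(v)$ or $\lambda_d(v)$. For (\ref{eq: rec_e}) I decompose any sub-coupling on $\mathcal{T}^+(v)$ according to the signed amount $x$ of mass that crosses the edge $(v, p(v))$: the residual problem on $\mathcal{T}(v)$ then has an extra $x$ units at $v$, the direct transport cost is $|x| \cdot w(v, p(v))$, and Condition (2) of Section 4 rules out the alternative of destroying mass at $v$ and recreating it at $p(v)$. For (\ref{eq: rec_t}) any mass present at an internal node $p$ must be partitioned between the two extended child subtrees, and optimising this split is exactly a min-sum convolution of $e_v$ and $e_u$. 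Hence $t_{r,0} = \text{GKR}(\bolda, \boldb)$.

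Using Lemma \ref{lemma: piecewise} I store each $t_v$ and $e_v$ by its minimum $m$, argmin $b$, and the sorted multiset $B$ of (slope, length) breakpoint pairs; I keep $B$ in a balanced BST (for instance a treap or red-black tree) keyed by slope, augmented with subtree length sums so that any position can be located in $O(\log n)$ time. Lemma \ref{thm: convex_min_sum_conv} then reduces (\ref{eq: rec_t}) to a sorted merge of two slope sequences with additive updates to $m$ and $b$. The leaf initialisation inserts two segments in $O(\log n)$ time. For the extension (\ref{eq: rec_e}), the term $|x| \cdot w(v, p(v))$ shifts slopes by $-w$ on $x < 0$ and by $+w$ on $x > 0$, so I split the BST at $x = 0$, apply additive slope tags of $\pm w$ via lazy propagation, and concatenate; in a split-merge balanced tree this is $O(\log n)$, with $m$ and $b$ updated in $O(1)$.

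The main obstacle is (\ref{eq: rec_t}): a naive merge of the two children's BSTs costs $\Theta(|\mathcal{X}_v| + |\mathcal{X}_u|)$ per internal node and accumulates to $\Theta(n^2)$. I overcome this with small-to-large merging --- at every internal node I iterate through the smaller of the two child BSTs, first pushing down all pending lazy tags on both sides, and reinsert its elements one by one into the larger BST. The standard weighted-union argument shows that every original segment is reinserted $O(\log n)$ times across the whole execution because the BST containing it at least doubles in size on each reinsertion, totalling $O(n \log n)$ insertions at $O(\log n)$ each. Combined with $O(n)$ leaf initialisations and $O(n)$ extensions at $O(\log n)$ each, the overall runtime is $O(n \log^2 n)$, yielding Theorem \ref{thm: quasilinear}.
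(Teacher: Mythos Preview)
Your proposal is correct and follows essentially the same route as the paper: represent each $t_v$ by its sorted slope multiset in a balanced BST with lazy additive tags, handle Eq.~(\ref{eq: rec_e}) by a split at $x=0$ plus two range-adds, and handle Eq.~(\ref{eq: rec_t}) by small-to-large (weighted-union) merging, yielding $O(n\log^2 n)$.

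One technical caveat: you write that at each merge you ``first push down all pending lazy tags on both sides'' before reinserting. Read literally, a full tag flush on the \emph{larger} BST costs $\Theta(|B_{\text{large}}|)$, and summed over all internal nodes of a degenerate (caterpillar) binary tree this is $\Theta(n^2)$, which invalidates the bound you subsequently derive. The intended implementation---and what the paper's proof relies on---is to resolve tags fully only on the \emph{smaller} side (which you are traversing in its entirety anyway, so this is free), while on the larger side tags are pushed down only along the $O(\log n)$ search path of each individual insertion. With that correction your accounting of $O(n\log n)$ insertions at $O(\log n)$ apiece is exactly the paper's argument.
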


Because many unbalanced OT problems are special cases of GKR, they can also be computed in quasi-linear time.

\begin{corollary}
The Kantorovich Rubinstein \cite{lellmann2014imaging, guittet2002extended}, optimal partial transport \cite{caffarelli2010free, figalli2010optimal}, Figalli \cite{figalli2010new, kantorovich1957functional}, and Lacombe \cite{lacombe2018large} distances can be computed exactly in quasi-linear time on a tree metric. 
\end{corollary}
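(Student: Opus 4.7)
The plan is to exhibit each of the four distances as a special case of the generalized Kantorovich Rubinstein distance by an appropriate choice of $\lambda_d$ and $\lambda_c$, verify that condition~(2) on these functions is satisfied (so that Theorem~\ref{thm: quasilinear} applies), and check that preparing the inputs to the GKR algorithm takes at most $O(n)$ time. Combined with Theorem~\ref{thm: quasilinear}, this yields the $O(n\log^2 n)$ bound for each distance.

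For the \textbf{Kantorovich Rubinstein distance} \cite{lellmann2014imaging, guittet2002extended}, I take $\lambda_d(x)=\lambda_c(x)=\lambda$ for some constant $\lambda\ge 0$; condition~(2) reads $\lambda\le c(x,y)+\lambda$, which is immediate since $c\ge 0$. For the \textbf{optimal partial transport} problem \cite{caffarelli2010free, figalli2010optimal}, I will invoke the equivalence with the Kantorovich Rubinstein distance noted by Chizat et al.\ \cite{chizat2018interpolating} (one tunes the uniform penalty $\lambda$ so that the optimal sub-coupling has exactly the prescribed transported mass $\kappa$). In both cases condition~(2) holds trivially and no preprocessing of $\lambda_d,\lambda_c$ is needed.

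For \textbf{Figalli's distance} \cite{figalli2010new, kantorovich1957functional} and the \textbf{Lacombe distance} \cite{lacombe2018large}, I take $\lambda_d(x)=\lambda_c(x)=d_{\mathcal{T}}(x,S)$, where $S\subset\mathcal{X}$ is the designated sink set: the ``boundary'' $\partial\mathcal{X}$ for Figalli and the ``diagonal'' $\Delta$ for Lacombe, each modeled as a distinguished subset of nodes of the tree. Condition~(2) reduces to $d_{\mathcal{T}}(x,S)\le d_{\mathcal{T}}(x,y)+d_{\mathcal{T}}(y,S)$, which is just the triangle inequality for the tree geodesic and thus holds for every $x,y\in\mathcal{X}$. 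The values $\{d_{\mathcal{T}}(v,S)\}_{v\in\mathcal{X}}$ can be computed simultaneously in $O(n)$ time by a multi-source BFS/DFS on $\mathcal{T}$ started from the nodes in $S$.

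Putting these pieces together, each of the four distances reduces in $O(n)$ preprocessing to an instance of Problem~\ref{prob: GKR_tree} satisfying condition~(2), and Theorem~\ref{thm: quasilinear} solves that instance in $O(n\log^2 n)$ time. There is no substantial obstacle: the real work has already been carried out in formulating the GKR framework (Section 4), in isolating condition~(2) as the only structural assumption used by the algorithm, and in proving Theorem~\ref{thm: quasilinear}; the present statement is essentially a bookkeeping consequence, whose only nontrivial step is checking condition~(2) for each specific choice of $\lambda_d,\lambda_c$.
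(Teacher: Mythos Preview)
Your proposal is correct and follows the same approach as the paper, which treats the corollary as immediate from Theorem~\ref{thm: quasilinear} together with the special-case reductions already described in Sections~\ref{sec: related} and~4; the paper does not give a formal proof beyond the sentence ``Because many unbalanced OT problems are special cases of GKR, they can also be computed in quasi-linear time.'' Your write-up is more explicit in verifying condition~(2) for each choice of $\lambda_d,\lambda_c$ and in bounding the preprocessing cost, but the underlying argument is identical.
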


Besides, according to Theorem \ref{thm: tree_approx}, tree GKR can approximate Euclidean GKR. Therefore, our algorithm can also \emph{approximate} Euclidean GKR problems.

\begin{figure}[tb] \label{fig: example}
\centering
\includegraphics[width=0.8\linewidth]{./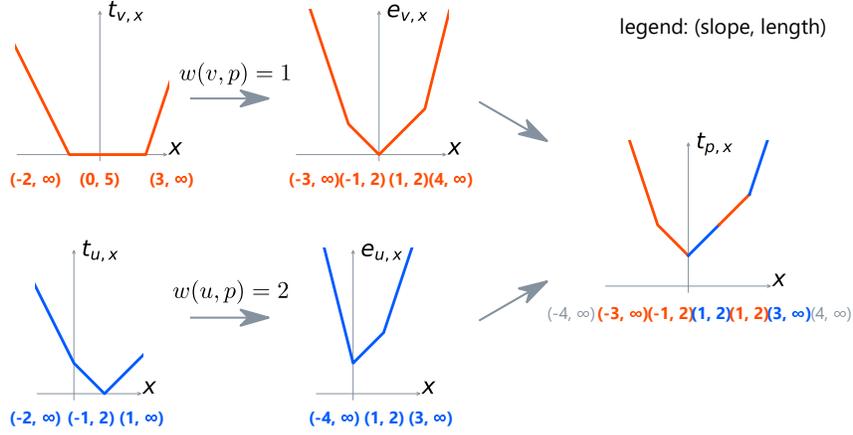}
\caption{\textbf{Example of DP computation.} Our algorithm manages a piece-wise linear function by a set of the slopes and lengths of the segments.}
\vspace{-0.2in}
\end{figure}

\begin{algorithm}[t]
\caption{$\textsc{GKR}(\mathcal{T}, \bolda, \boldb, \lambda_d, \lambda_c, v)$}
\label{algo}
\setstretch{1.2}
\DontPrintSemicolon 
\nl\KwData{Tree $\mathcal{T}$, Measures $\bolda, \boldb$, Cost functions $\lambda_d, \lambda_c$, and Node $v$.}
\nl\KwResult{$t_v = (m(t_v), b(t_v), B(t_v))$}
\nl\Begin{
    \nl $B(t_v) \leftarrow \{(-\lambda_c(v), \infty), (\lambda_d(v), \infty)\}$; $m(t_v) \leftarrow 0$; $b(t_v) \leftarrow \boldb_v - \bolda_v$; \tcp*{base case}
  \nl\For{$c\colon$child of $v$}{ 
    \nl $t_c \leftarrow \textsc{GKR}(\mathcal{T}, \bolda, \boldb, \lambda_d, \lambda_c, c)$\;
    \nl Subtract $w(c, v)$ from the slopes of segments of $t_c$ where $x < 0$ and add $w(c, v)$ to the slopes of segments of $t_c$ where $x \ge 0$. \tcp*{Eq.(\ref{eq: rec_e})}
    \nl\If{$|B(t_c)| > |B(t_v)|$}{
        \nl swap$(B(t_c), B(t_v))$ \tcp*{weighted-union heuristics}
    }
    \nl $m(t_v) \leftarrow m(t_v) + m(t_c)$; $b(t_v) \leftarrow b(t_v) + b(t_c)$\;
    \For{$s\colon$\textup{segments in} $B(t_c)$}{
        \nl Insert $s$ into $B(t_v)$ \tcp*{Eq.(\ref{eq: rec_t})}
    }
  }
}
\end{algorithm}

\textbf{Optimal Coupling.} Our algorithm can also reconstruct the optimal coupling $\pi^* \in \mathcal{U}^-(\bolda, \boldb)$ efficiently by backtracking the DP table. This can be easily done by storing which segment is from which node and checking whether each segment is in the negative $x$ or positive $x$ in the root. Once the amount of mass creation and destruction at each node is computed, it is easy to compute the optimal coupling of the standard OT on a tree \cite{backurs2020scalable, le2019tree}. This indicates that Flowtree \cite{backurs2020scalable} can be combined with the tree GKR to approximate the Euclidean GKR distance more accurately. We leave this direction as future work.

\section{Experiments} \label{sec: experiment}

We confirm the effectiveness of the proposed method through numerical experiments. We run the experiments on a Linux server with an Intel Xeon CPU E7-4830 @ 2.00 GHz and 1 TB RAM. We aim to answer the following questions. (Q1) How fast is the proposed method? (Q2) How accurately can tree GKR approximate Euclidean GKR? (Q3) Is the proposed method applicable to large-scale datasets? We also investigate the noise robustness of GKR, but we defer this to the appendices because the priority of this work is not discussing the usefulness of unbalanced OT, which has been extensively proved in existing works, but we aim at providing an efficient method for unbalanced OT.

\setlength{\intextsep}{0pt}%
\setlength{\columnsep}{0.1in}%
\begin{wrapfigure}{r}{2.3in} 
\centering
\includegraphics[width=2.3in]{./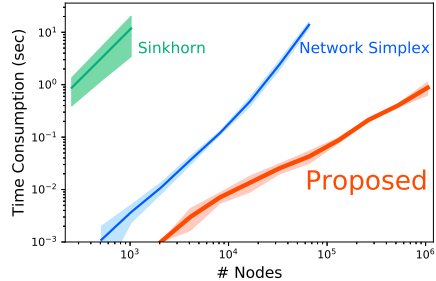}
\vspace{-0.3in} 
\caption{Speed comparison.}
\label{fig: speed}
\end{wrapfigure}

\textbf{Speed Comparison:} We first measure the speed of the proposed method to show its efficiency. We use two baseline methods. The first one is the minimum cost flow-based algorithm, which takes as input the augmented graph as proposed by Guittet \cite{guittet2002extended}. The second one is the Sinkhorn algorithm, where the cost matrix is the shortest distance matrix of the augmented graph. We use the network simplex algorithm in the Lemon graph library \cite{lemon} as the implementation of the minimum cost flow algorithm and the C++ implementation of the Sinkhorn algorithm \cite{sato2020fast}. For each $n = 2^7, 2^8, \dots, 2^{20}$, we generate $10$ random trees with $n$ nodes. The amount of mass in each node is an integer drawn from an i.i.d. uniform distribution from $0$ to $10^6$. The weight of each edge is also drawn from the same distribution. Figure \ref{fig: speed} plots the time consumption of each method. The proposed method is several orders of magnitude faster than both baseline methods, and the difference is likely to increase as the data size increases. We also conduct the same experiments with a laptop computer with an Intel Core i5-7200U @ 2.50 GHz and 4 GB RAM. The proposed method processes a tree with $2^{20} (> 10^6)$ nodes in 0.70 second on this laptop. 

\begin{figure}[t]
 \begin{minipage}{0.5\hsize}
  \begin{center}
    \includegraphics[width=\hsize]{./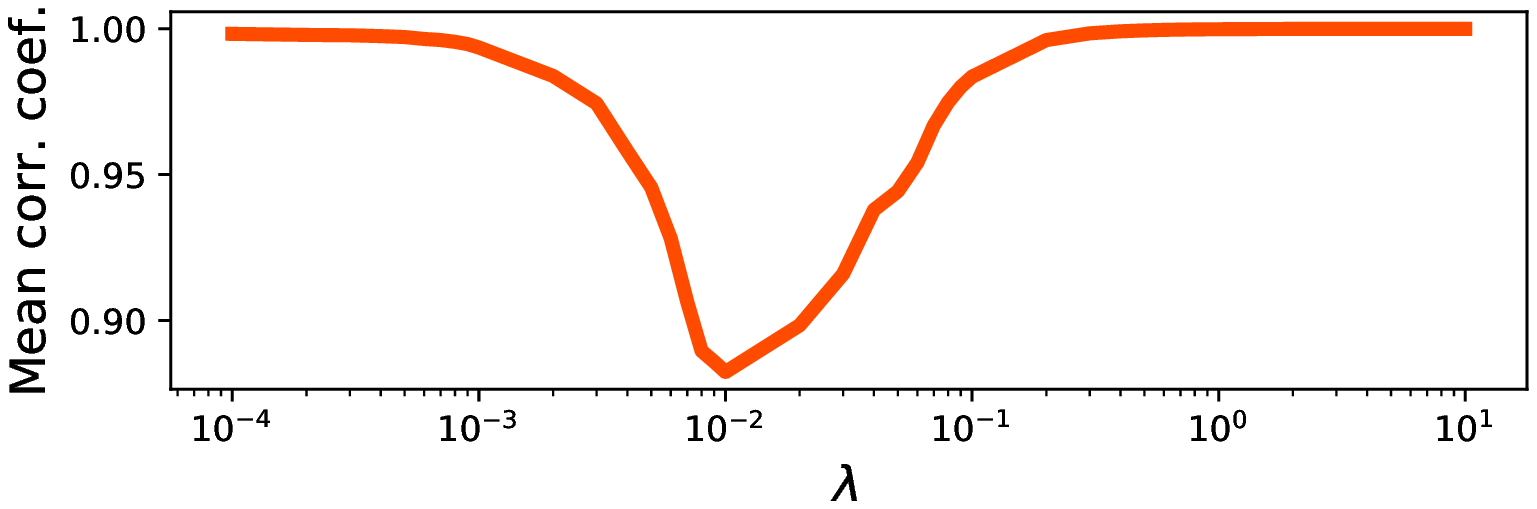}
    \includegraphics[width=\hsize]{./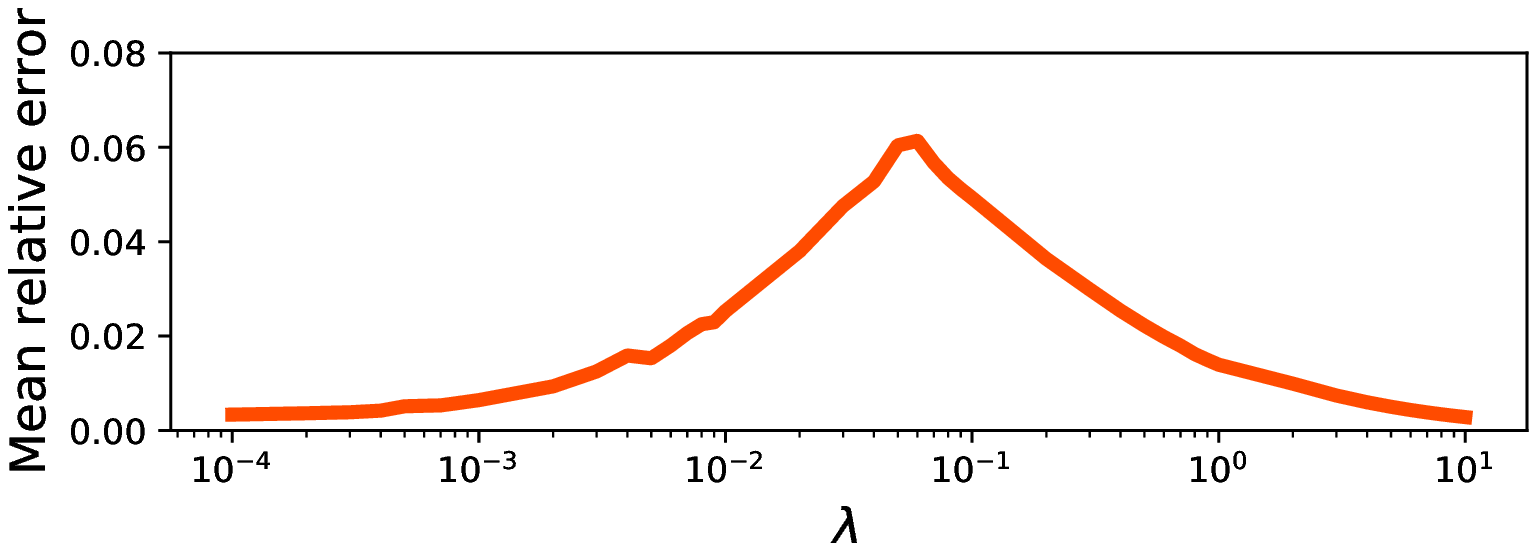}
    \includegraphics[width=\hsize]{./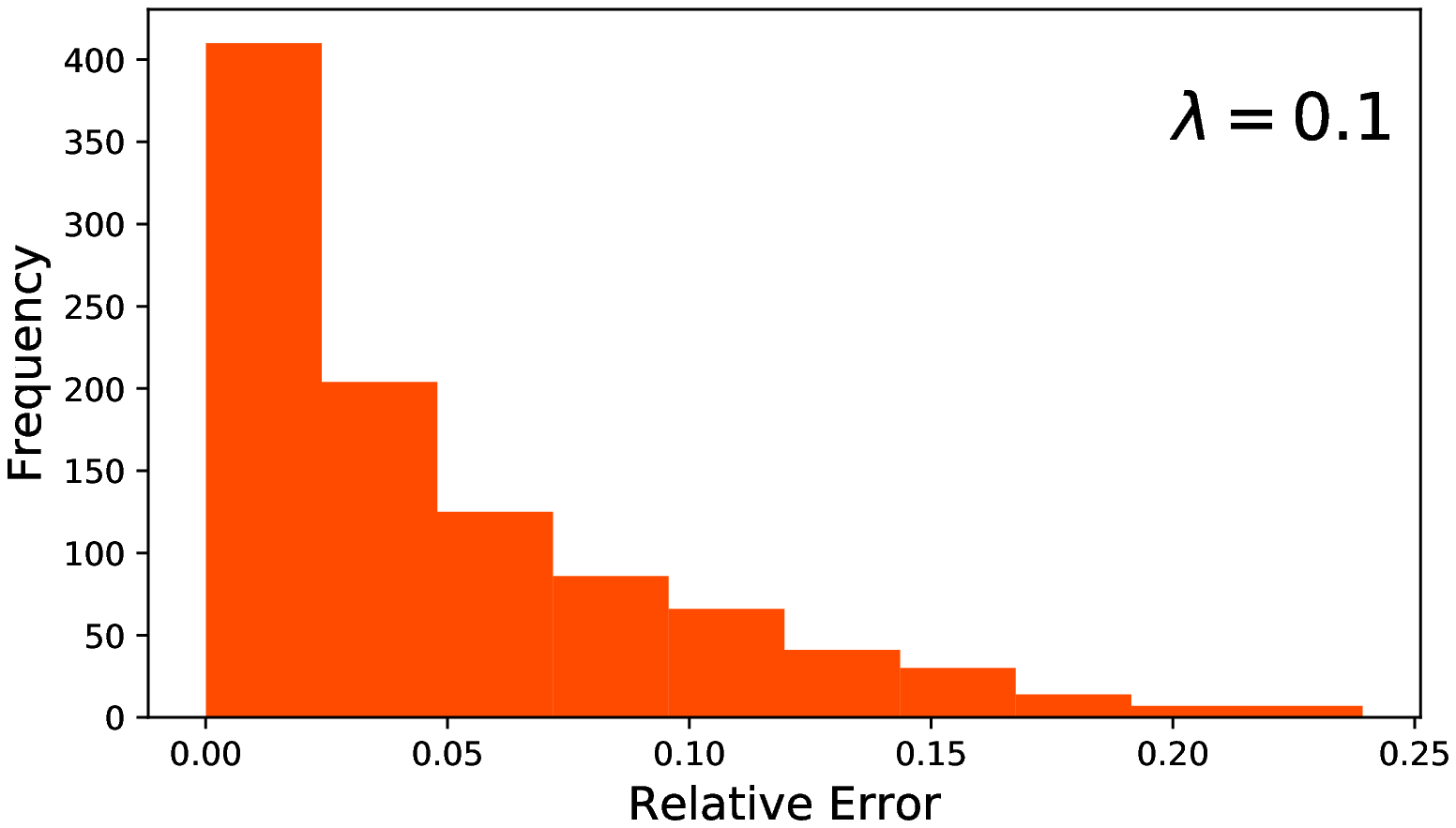}
    \vspace{-0.2in}
  \end{center}
 \caption{(Top) Spearman's $\rho$. \newline (Mid, Bottom) Relative error.}
 \label{fig: approx}
 \end{minipage}
 \begin{minipage}{0.5\hsize}
  \begin{center}
    \includegraphics[width=\hsize]{./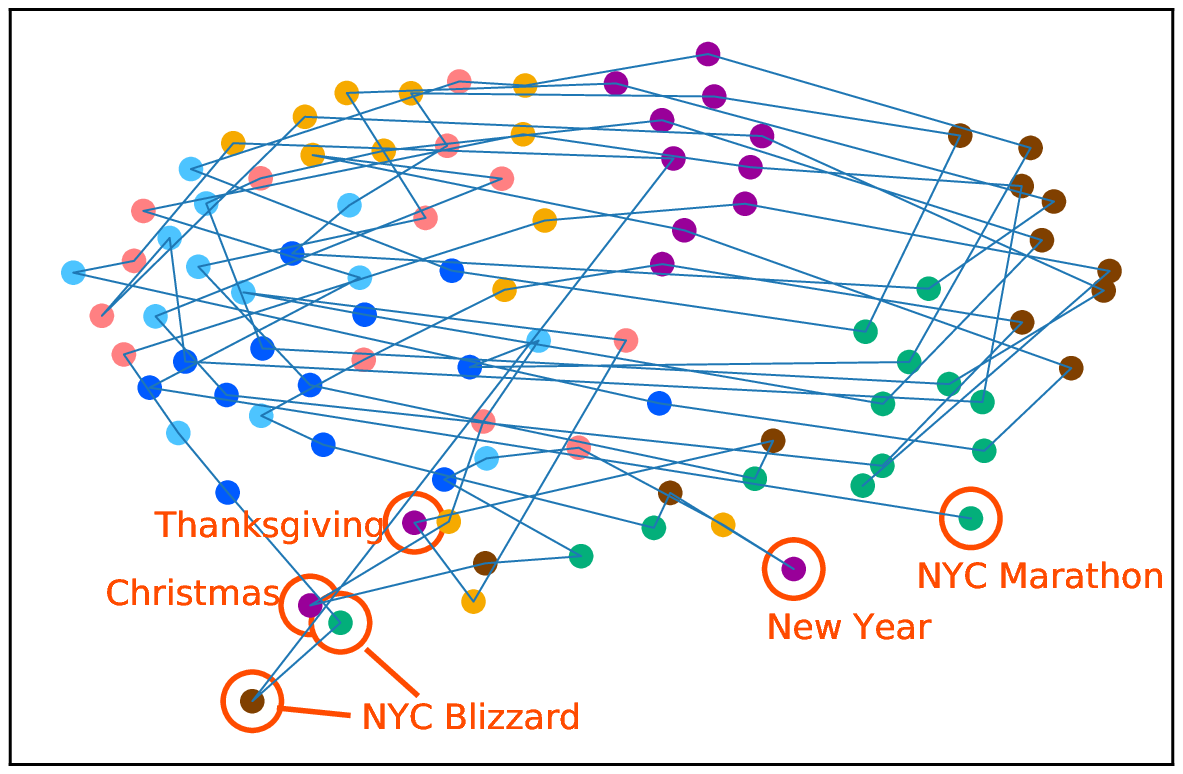}
    \includegraphics[width=\hsize]{./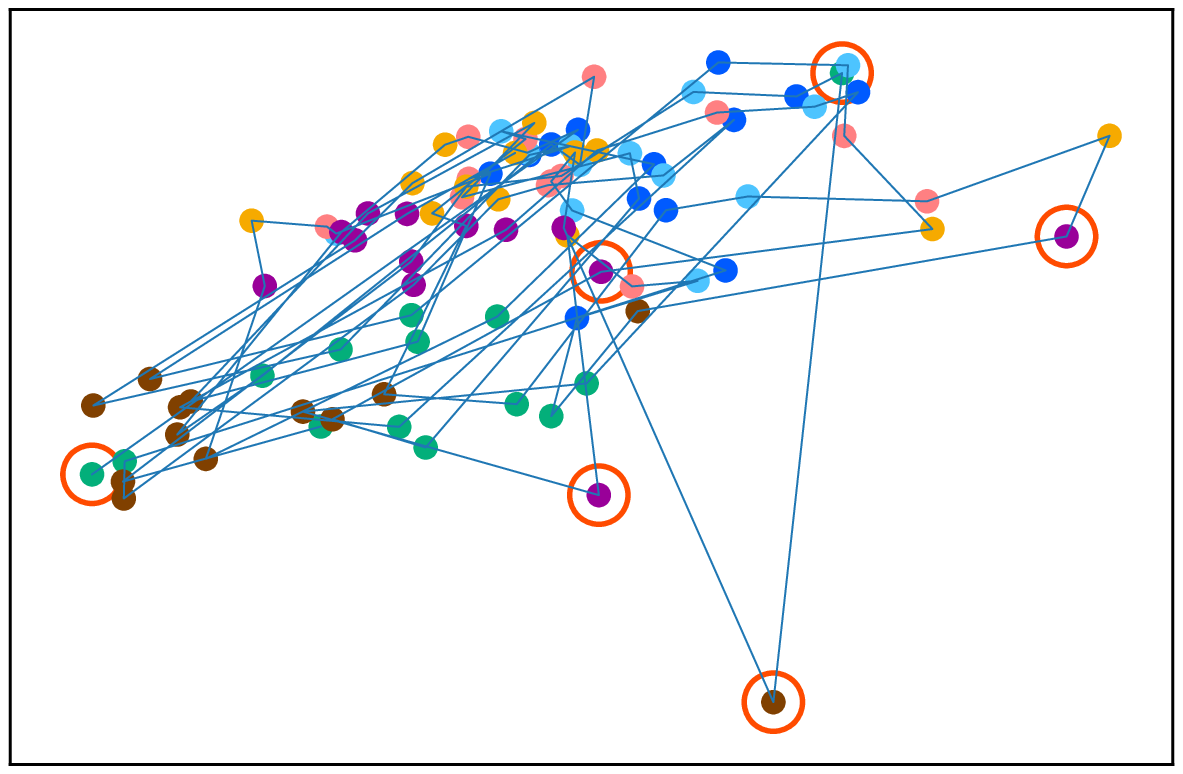}
    \vspace{-0.2in}
  \end{center}
\caption{MDS plots of (Top) tree GKR and \newline (Bottom) tree sliced Wasserstein.}
\label{fig: taxi}
 \end{minipage}
 \vspace{-0.2in}
\end{figure}

\textbf{Approximation Accuracy:} We then measure the accuracy of the approximation of the quadtree GKR. We compute the GKR distances in the Euclidean space \emph{exactly} using the network flow algorithm and compute them to the approximation by the quadtree GKR. We use Chicago Crime dataset\footnote{\url{https://data.cityofchicago.org/}}, where each measure corresponds to a day and contains Dirac mass in the place where a crime occurred in that day. We approximate the GKR distance of measures endowed with the Euclidean distance by the GKR distance with the quadtree metric. We randomly sample $1000$ pairs of days from January 2, 2015 to December 31, 2015 and compute the GKR distance $\text{GKR}_{\text{euc}}$ of these measures in the Euclidean space \emph{exactly} with $\lambda_d = \lambda_c = \lambda \in [10^{-4}, 10.0]$. We then compute the GKR distance $\text{GKR}_{\text{tree}}$ of these measures using the quadtree of depth $15$ without random translation. We linearly scale the quadtree metric because the scales of these distances are different. The scale is determined by $10$ training data so that the relative error is minimized (see Appendix \ref{sec: quadtree}). We compute the relative error $|\text{GKR}_{\text{euc}} - \text{GKR}_{\text{tree}}| / \text{GKR}_{\text{euc}}$ and the Spearman's rank correlation coefficient $\rho$ between $\text{GKR}_{\text{euc}}$ and $\text{GKR}_{\text{tree}}$ using the remaining $990$ pairs of measures. Figure \ref{fig: approx} shows that the correlation coefficient is larger than $0.9$ for most $\lambda$, and the relative error is less than $0.1$ for most cases. This indicates that tree GKR can well approximate Euclidean GKR. Note that Theorem \ref{thm: tree_approx} guarantees approximation accuracy beyond two dimensions. The experiments in higher dimensions are reported in Appendix \ref{sec: approx_experiment}

\textbf{Case Study (Large-Scale Dataset):} We apply the GKR distance to a large-scale dataset, namely, the NewYork taxi dataset\footnote{\url{https://www1.nyc.gov/site/tlc/about/tlc-trip-record-data.page}} from November 1, 2015 to January 31, 2016. This dataset contains more than $66$ million events, which is a scale never seen before in the unbalanced OT literature. The ground space is $2$-dimensional space $\mathbb{R}^2$, and each mass represents a taxi pickup or dropoff event. We compute the distance between each pair of two days using the tree sliced Wasserstein (i.e., $\lambda_c = \lambda_d = \infty$) and GKR distance with $\lambda_c = \lambda_d = 0.001$ with quadtree. We normalize measures so that the total mass is equal to one for the tree sliced Wasserstein and use raw measures for GKR. Figure \ref{fig: taxi} plots the results of multidimensional scaling of both distances. Each dot represents a day and a color represents a day of the week. The GKR distance locates anomaly days, such as Christmas, new year, and severe blizzard days, to the bottom, weekends to the right, and weekdays to the left. Moreover, a clear periodicity can be seen in the GKR plot. The tree sliced Wasserstein also separates weekends and weekdays but does not locate anomaly days in similar positions. This is because GKR takes the number of events into account, whereas the standard OT does not due to normalization. The merits of both methods depend on the application. If one wants to know the probabilistic distribution of events, the standard OT is more appropriate, but if one wants to distinguish the amount of mass, the GKR distance is beneficial. GKR can balance this trade-off by setting parameters $\lambda_c$ and $\lambda_d$.

\section{Conclusion}

In this paper, we proposed the GKR distance, which encompasses many previous unbalanced OT problems. We showed that important special cases of the GKR distance on $L_p$ metrics cannot be computed in strongly subquadratic time under SETH. We then proposed a quasi-linear time algorithm to compute the GKR distance on tree metrics. Our algorithm can process more than one million masses in one second and can be applied to large-scale problems.

\section*{Broader Impact}

This work involves no ethical aspects. Because our algorithm reduces massive amount of computation, this work contributes to society by reducing power consumption and carbon footprint.

\section*{Acknowledgments and Funding Disclosure}

This work was supported by the JSPS KAKENHI GrantNumber 20H04243 and 20H04244.

\bibliography{citation}
\bibliographystyle{plain}

\newpage

\appendix

\section{Proofs}

\begin{prooftn}{\ref{thm: hardness}}
The Kantorovich Rubinstein (KR) distance is defined as follows.

\[ \text{KR}(\mu, \nu) = \min_{\pi \in \mathcal{U}^-(\mu, \nu)} \int_{\mathcal{X} \times \mathcal{X}} c(x, y) d\pi(x, y) + \lambda \|\mu - \text{proj}_1\pi\|_1 + \lambda \|\nu - \text{proj}_2\pi\|_1 \]

where $\lambda \in \mathbb{R}^+$ is a parameter. When $\lambda_c(x) = \lambda_d(x) = \lambda ~(\forall x \in \mathcal{X})$, the GKR distance recovers the KR distance. The optimal partial transport distance is defined as follows.

\[ \text{OPT}(\mu, \nu) = \min_{\pi \in \mathcal{U}_\kappa(\mu, \nu)} \int_{\mathcal{X} \times \mathcal{X}} c(x, y) d\pi(x, y), \]

where $\kappa \le \min(\|\mu\|_1, \|\nu\|_1)$ is a parameter and $\mathcal{U}_\kappa(\mu, \nu) = \{ \pi \in \mathcal{M}(\mathcal{X} \times \mathcal{X}) \mid \text{proj}_1 \pi \le \mu, \text{proj}_2 \pi \le \nu, \|\pi\|_1 = \kappa \}$ is the set of partial couplings. We prove the theorem by reduction from the following problem.

\begin{problem}[Bichromatic Hamming Close Pair (BHCP) problem \cite{alman2015probabilistic}] \label{prob: BHCP}
\textbf{Input:} Two sets $\mathcal{A}, \mathcal{B} \subset \{0, 1\}^d$ of $n$ binary vectors. 
\textbf{Output:} The closest pair distance $\min_{\bolda \in \mathcal{A}, \boldb \in \mathcal{B}} \|\bolda - \boldb\|_1$.
\end{problem}

\begin{lemma}[Alman et al. \cite{alman2015probabilistic}] \label{lemma: BHCP}
If there exists an algorithm that solves Problem \ref{prob: BHCP} in $O(n^{2 - \varepsilon})$ time for $d = \omega(\log n)$ and some $\varepsilon > 0$, SETH is false.
\end{lemma}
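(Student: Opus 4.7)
The plan is to prove the contrapositive via a standard ``split and list'' reduction from $k$-SAT to BHCP, thereby transferring SETH-hardness. Start with a $k$-SAT instance on $n$ variables and $m$ clauses; by padding with trivial tautologies we may assume $m = \omega(n)$ while keeping $m = \mathrm{poly}(n)$. Partition the variables into halves $L$ and $R$ of size $n/2$ each and enumerate the $2^{n/2}$ partial assignments of each half. For each assignment $\alpha$ of $L$, build $a_\alpha \in \{0,1\}^m$ whose $i$-th coordinate is $1$ if $\alpha$ fails to satisfy clause $C_i$ and $0$ otherwise; define $b_\beta \in \{0,1\}^m$ symmetrically for each assignment $\beta$ of $R$. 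Then $\alpha \cup \beta$ satisfies the formula iff every clause is satisfied by at least one half, iff $a_\alpha \cdot b_\beta = 0$. This yields an Orthogonal Vectors instance with $N = 2^{n/2}$ vectors per side in dimension $m$.

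Next I reduce this OV instance to BHCP using two gadgets. First, equalize Hamming weights: embed every vector into $\{0,1\}^{3m}$ by reserving a padding block of size $m$ for the $A$-side and a separate block of size $m$ for the $B$-side. For each $a_\alpha$, place $m - |a_\alpha|_1$ ones inside its dedicated block and leave the other block all zero; do the symmetric thing for $b_\beta$. Because the two padding blocks never both contain a $1$ at the same coordinate, the inner product is preserved, $u' \cdot v' = a_\alpha \cdot b_\beta$, and every padded vector has weight exactly $m$. Second, replace every $A$-side vector by its bitwise complement $\bar{u}'$. Using $|\bar{u}'|_1 = 2m$, $|v'|_1 = m$, and $\bar{u}' \cdot v' = m - u' \cdot v'$, the equal-weight identity gives
\[
\mathrm{HD}(\bar{u}', v') \;=\; |\bar{u}'|_1 + |v'|_1 - 2\,\bar{u}' \cdot v' \;=\; 2m + m - 2(m - u' \cdot v') \;=\; m + 2\,(a_\alpha \cdot b_\beta),
\]
so the closest pair distance equals $m$ iff an orthogonal pair exists iff the formula is satisfiable.

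Finally I verify the parameters. The resulting BHCP instance has $N = 2^{n/2}$ vectors per side in dimension $d = 3m = \omega(n) = \omega(\log N)$, so the hypothesis's dimension condition is satisfied. A hypothetical $O(N^{2-\varepsilon})$-time BHCP algorithm would therefore solve it in $2^{(1-\varepsilon/2)n}\,\mathrm{poly}(n)$ time; constructing the vectors themselves costs only $2^{n/2}\,\mathrm{poly}(n)$. The total runtime is $O(2^{(1-\varepsilon')n})$ for some $\varepsilon' > 0$, and because the reduction is oblivious to the clause width, this upper bound holds uniformly in $k$, directly contradicting SETH.

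The main obstacle I anticipate is the weight-equalization gadget: we must pad so that $\mathrm{HD}$ becomes a strictly monotone function of the inner product, while preserving orthogonality exactly and keeping the dimension polynomial. The disjoint-padding-blocks trick above is precisely what achieves this; without equalized weights, $\mathrm{HD}(u,v) = |u|_1 + |v|_1 - 2\,u\cdot v$ is contaminated by weight terms that depend on $\alpha$ and $\beta$, and the clean identity relating BHCP distance to inner product would be lost.
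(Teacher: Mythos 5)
The paper does not prove this lemma; it is cited wholesale from Alman et al.\ \cite{alman2015probabilistic} and used as a black box inside the proof of Theorem~\ref{thm: hardness}, so there is no in-paper argument to compare your proposal against. That said, your reconstruction is correct and matches the standard technique behind the cited result: split-and-list from $k$-SAT to Orthogonal Vectors, then convert OV to a Hamming nearest-pair instance. Your two gadgets are exactly right: the disjoint padding blocks equalize weights without perturbing inner products, and the complement on one side flips the sign in $\mathrm{HD}(u,v)=|u|_1+|v|_1-2\,u\cdot v$ so that the closest pair (distance $m$) corresponds to an orthogonal pair rather than a maximally overlapping one. The parameter accounting is also correct; worth noting is that because the lemma is stated with the mild dimension condition $d=\omega(\log n)$ rather than $d=O(\log n)$, you correctly avoid invoking the Sparsification Lemma --- the polynomial clause count (after padding with tautologies to ensure $m=\omega(n)$) already gives $d=3m$ that is $\omega(\log N)$ yet still $N^{o(1)}$, so the hypothetical $O(N^{2-\varepsilon})$ BHCP solver yields a $2^{(1-\varepsilon')n}$-time SAT algorithm uniformly in $k$. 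One small presentational point: you should say explicitly that adding a tautology $x\vee\bar x$ with $x\in L$ sets the corresponding coordinate of every $a_\alpha$ to $0$, hence preserves all inner products (and satisfiability), which justifies the ``we may assume $m=\omega(n)$'' step; as written this is left implicit.
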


\textbf{The KR distance:} We prove the contraposition of Theorem \ref{thm: hardness} for the KR distance. Suppose there exists an algorithm that computes the KR distance in $O(n^{2 - \varepsilon})$ time for some $\varepsilon > 0$ on $d = \omega(\log n)$ dimensional $L_p$ metric space. We solve the BHCP problem using this algorithm. Let $\mathcal{A}, \mathcal{B} \subset \{0, 1\}^d$ be any sets of $n$ binary vectors and $\delta = \min_{\bolda \in \mathcal{A}, \boldb \in \mathcal{B}} \|\bolda - \boldb\|_1$. Let $\mu = \sum_{\bolda \in \mathcal{A}} \delta_{\bolda}$ and $\nu = \sum_{\boldb \in \mathcal{B}} \delta_{\boldb}$, where $\delta_\boldx$ is a Dirac mass at $\boldx$. For $\bolda \in \mathcal{A}, \boldb \in \mathcal{B}$, we use $c(\bolda, \boldb) = \|\bolda - \boldb\|_p^p = \|\bolda - \boldb\|_1$ as a cost function. We prove that $\text{KR}(\mu, \nu) = 2 \lambda n$ if $\lambda \le \delta / 2$ and $\text{KR}(\mu, \nu) < 2 \lambda n$ otherwise. If $\lambda \le \delta / 2$,

\begin{align*}
&\text{KR}(\mu, \nu) \\
&= \min_{\pi \in \mathcal{U}^-(\mu, \nu)} \sum_{x \in \mathcal{A}} \sum_{y \in \mathcal{B}} c(x, y) \pi_{x, y} + \lambda \sum_{x \in \mathcal{A}} \left( \mu_x - \sum_{y \in \mathcal{B}} \pi_{x, y} \right) + \lambda \sum_{y \in \mathcal{B}} \left(\nu_y - \sum_{x \in \mathcal{A}} \pi_{x, y} \right) \\
&\ge \min_{\pi \in \mathcal{U}^-(\mu, \nu)} \sum_{x \in \mathcal{A}} \sum_{y \in \mathcal{B}} 2 \lambda \pi_{x, y} + \lambda \sum_{x \in \mathcal{A}} \left( \mu_x - \sum_{y \in \mathcal{B}} \pi_{x, y} \right) + \lambda \sum_{y \in \mathcal{B}} \left(\nu_y - \sum_{x \in \mathcal{A}} \pi_{x, y} \right) \\ 
&= \min_{\pi \in \mathcal{U}^-(\mu, \nu)} 2 \lambda \sum_{x \in \mathcal{A}} \sum_{y \in \mathcal{B}} 2 \lambda \pi_{x, y} + \lambda \sum_{x \in \mathcal{A}} \mu_x - \lambda \sum_{x \in \mathcal{A}} \sum_{y \in \mathcal{B}} \lambda \pi_{x, y} + \lambda \sum_{y \in \mathcal{B}} \nu_y - \lambda \sum_{y \in \mathcal{B}} \sum_{x \in \mathcal{A}} \pi_{x, y} \\
&= \lambda \sum_{x \in \mathcal{A}} \mu_x + \lambda \sum_{y \in \mathcal{B}} \nu_y = 2 \lambda n.
\end{align*}

Moreover, if $\lambda > \delta / 2$, we take $\bolda^* \in \mathcal{A}$ and $\boldb^* \in \mathcal{B}$ such that $\|\bolda^* - \boldb^*\|_1 < 2 \lambda$. Let $\pi^*$ be a subcoupling such that $\pi(\bolda^*, \boldb^*) = 1$ and $\pi(\bolda, \boldb) = 0$ if $\bolda \neq \bolda^*$ or $\boldb \neq \boldb^*$. Then,

\begin{align*}
&\text{KR}(\mu, \nu) \\
&= \min_{\pi \in \mathcal{U}^-(\mu, \nu)} \sum_{x \in \mathcal{A}} \sum_{y \in \mathcal{B}} c(x, y) \pi_{x, y} + \lambda \sum_{x \in \mathcal{A}} \left( \mu_x - \sum_{y \in \mathcal{B}} \pi_{x, y} \right) + \lambda \sum_{y \in \mathcal{B}} \left(\nu_y - \sum_{x \in \mathcal{A}} \pi_{x, y} \right) \\
&\le \sum_{x \in \mathcal{A}} \sum_{y \in \mathcal{B}} c(x, y) \pi^*_{x, y} + \lambda \sum_{x \in \mathcal{A}} \left( \mu_x - \sum_{y \in \mathcal{B}} \pi^*_{x, y} \right) + \lambda \sum_{y \in \mathcal{B}} \left(\nu_y - \sum_{x \in \mathcal{A}} \pi^*_{x, y} \right) \\ 
&= \|\bolda^* - \boldb^*\|_1 \pi(\bolda^*, \boldb^*) + \lambda \sum_{x \in \mathcal{A}} \mu_x - \lambda \sum_{x \in \mathcal{A}} \sum_{y \in \mathcal{B}} \lambda \pi^*_{x, y} + \lambda \sum_{y \in \mathcal{B}} \nu_y - \lambda \sum_{y \in \mathcal{B}} \sum_{x \in \mathcal{A}} \pi^*_{x, y} \\
&< 2 \lambda  + \lambda n - \lambda + \lambda n - \lambda = 2 \lambda n.
\end{align*}

Therefore, a binary search algorithm can determine $\delta$ by calling an algorithm for the Kantorovich Rubinstein distance in $O(\log n)$ time because $\delta$ is an integer between $0$ and $n$. This means that we can solve the BHCP problem in $O(n^{2 - \varepsilon} \log n) \lesssim O(n^{2 - \varepsilon/2})$ time. From Lemma \ref{lemma: BHCP}, this indicates that SETH is false.

\textbf{Optimal partial transport:} We prove the contraposition of Theorem \ref{thm: hardness} for the optimal partial transport distance. Suppose there exists an algorithm that computes the optimal partial transport distance in $O(n^{2 - \varepsilon})$ time for some $\varepsilon > 0$ in $d = \omega(\log n)$ dimensional $L_p$ metric space. We solve the BHCP problem using this algorithm. Let $\mathcal{A}, \mathcal{B} \subset \{0, 1\}^d$ be any sets of $n$ binary vectors and $\delta = \min_{\bolda \in \mathcal{A}, \boldb \in \mathcal{B}} \|\bolda - \boldb\|_1$. Let $\mu = \sum_{\bolda \in \mathcal{A}} \delta_{\bolda}$ and $\nu = \sum_{\boldb \in \mathcal{B}} \delta_{\boldb}$, where $\delta_\boldx$ is a Dirac mass at $\boldx$. For $\bolda \in \mathcal{A}, \boldb \in \mathcal{B}$, we use $c(\bolda, \boldb) = \|\bolda - \boldb\|_p^p = \|\bolda - \boldb\|_1$ as a cost function. Then, if we set $\kappa = 1$, the optimal partial transport distance recovers the closest pair distance. Therefore, we can solve the BHCP problem in $O(n^{2 - \varepsilon})$ time. From Lemma \ref{lemma: BHCP}, this indicates that SETH is false.
\end{prooftn}

\begin{prooftn}{\ref{thm: tree_approx}}
\begin{lemma}[\cite{indyk2003fast}]
Let $\text{OT}_{\text{euc}}$ be the OT distance with Euclidean cost $c_{\text{euc}}(x, y) = \|x - y\|_2$. Let $\text{OT}_{\text{tree}}$ be the OT distance with quadtree cost $c_{\text{tree}}(x, y) = d_\mathcal{T}(x, y)$, where $\mathcal{T}$ is a quadtree. There exists a constant $C_{\text{OT}}$ such that for any measures $\mu$ and $\nu$, $\text{OT}_{\text{euc}}(\mu, \nu) \le C_{\text{OT}} \cdot \text{OT}_{\text{tree}}(\mu, \nu)$ holds. If we randomly translate measures when we construct a quadtree, there exists a constant $D_{\text{OT}}$ such that $\mathbb{E}_\mathcal{T}[\text{OT}_{\text{tree}}(\mu, \nu)] \le D_{\text{OT}} \cdot \text{OT}_{\text{euc}}(\mu, \nu) \log \Delta$, where $\Delta$ is the spread.
\end{lemma}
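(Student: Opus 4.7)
The plan is to prove the two inequalities pointwise in $(x, y)$ and lift each to an OT inequality by integrating against the optimal coupling of the other side. Fix a bounding hypercube of side $L$ containing the supports of $\mu$ and $\nu$, set $s_\ell = L \cdot 2^{-\ell}$ for the side length at depth $\ell$, recursively subdivide each cell into $2^d$ equal axis-aligned sub-cells down to depth $L_{\max} = O(\log \Delta)$, and weight every edge from a depth-$\ell$ cell to its parent by $w_\ell = \sqrt{d}\, s_\ell$ (proportional to the cell diameter). Points of $\mathbb{R}^d$ are then identified with leaves of $\mathcal{T}$.

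For the deterministic bound, fix $x, y \in \mathbb{R}^d$ and let $\ell^\star$ be the coarsest depth at which they sit in distinct cells. They share a common depth-$(\ell^\star - 1)$ cell of diameter $\sqrt{d}\, s_{\ell^\star - 1} = 2 w_{\ell^\star}$, so $\|x - y\|_2 \le 2 w_{\ell^\star}$; meanwhile the $\mathcal{T}$-path from $x$ to $y$ must ascend at least to their LCA at depth $\le \ell^\star - 1$, so $d_{\mathcal{T}}(x, y) \ge 2 w_{\ell^\star} \ge \|x - y\|_2$. Hence $c_{\text{euc}} \le c_{\text{tree}}$ pointwise; integrating against any $\pi \in \mathcal{U}(\mu, \nu)$ and taking the infimum yields $\text{OT}_{\text{euc}}(\mu, \nu) \le \text{OT}_{\text{tree}}(\mu, \nu)$, so $C_{\text{OT}} = 1$ works (a different edge weighting only modifies this absorbable constant).

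For the expected bound, randomize $\mathcal{T}$ by a uniform shift $v \sim \text{Unif}([0, L]^d)$ of the grid. Fix $x, y$ with $r = \|x - y\|_2$ and let $A_\ell = \{x, y \text{ separated at depth } \ell\}$. The events are nested ($A_0 \subseteq A_1 \subseteq \cdots$), so $\Pr[\ell^\star \le \ell] = \Pr[A_\ell]$. Coordinate-wise, the probability that $x_i$ and $y_i$ lie on opposite sides of the depth-$\ell$ grid is at most $|x_i - y_i|/s_\ell$; a union bound over the $d$ coordinates gives $\Pr[A_\ell] \le \min(1, d r / s_\ell)$. Conditioned on $\ell^\star = \ell$, the tree distance is at most $2 \sum_{j \ge \ell} w_j = O(\sqrt{d}\, s_\ell)$ (geometric series). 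By summation by parts,
\[ \mathbb{E}[d_{\mathcal{T}}(x, y)] = \sum_{\ell} \Pr[\ell^\star = \ell] \cdot O(\sqrt{d}\, s_\ell) \le O(\sqrt{d}) \sum_{\ell=0}^{L_{\max}} \Pr[A_\ell] \cdot s_\ell \le O(\sqrt{d}) \sum_\ell \min(1, dr/s_\ell)\, s_\ell. \]
Splitting the last sum at the transition $s_\ell \approx dr$ gives $O(dr)$ per coarse level ($L_{\max}$ of them) plus a geometric tail $O(dr)$ from fine levels, totaling $O(d \cdot r \cdot L_{\max})$. Taking $L_{\max} = O(\log \Delta)$ (beyond this depth all distinct support points are already isolated), we obtain $\mathbb{E}[c_{\text{tree}}(x, y)] \le D_{\text{OT}}\, c_{\text{euc}}(x, y) \log \Delta$ with $D_{\text{OT}} = O(d^{3/2})$. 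Letting $\pi^\star$ be the optimal Euclidean coupling and applying Fubini,
\[ \mathbb{E}_{\mathcal{T}}[\text{OT}_{\text{tree}}(\mu, \nu)] \le \mathbb{E}_{\mathcal{T}}\!\left[ \int c_{\text{tree}}\, d\pi^\star \right] = \int \mathbb{E}_{\mathcal{T}}[c_{\text{tree}}]\, d\pi^\star \le D_{\text{OT}} \log \Delta \int c_{\text{euc}}\, d\pi^\star = D_{\text{OT}} \cdot \text{OT}_{\text{euc}}(\mu, \nu) \log \Delta. \]

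The main obstacle is the random-shift analysis: establishing $\Pr[A_\ell] \le \min(1, dr/s_\ell)$ via the coordinate-wise union bound and telescoping correctly so that the nested-event probabilities collapse into a single $\log \Delta$-level sum without picking up spurious factors. One must also verify that the spread $\Delta$ (ratio of largest to smallest interesting distance) governs the effective number of quadtree levels $L_{\max}$, which is what makes the $\log \Delta$ factor the correct and $n$-independent dependence. The deterministic comparison and the Fubini/optimal-coupling lift are routine.
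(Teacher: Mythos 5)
The paper does not prove this lemma: it is stated with a citation to Indyk and Thaper \cite{indyk2003fast} and then used as a black box inside the proof of Theorem~\ref{thm: tree_approx}, so there is no in-paper proof to compare against. Your write-up is a faithful reconstruction of the standard random-shifted quadtree argument from that reference, and both halves appear sound. For the deterministic direction, the comparison of the diameter of the common depth-$(\ell^*-1)$ cell against twice the weight of a single depth-$\ell^*$ edge correctly yields $c_{\mathrm{euc}} \le c_{\mathrm{tree}}$ pointwise, which integrates against any coupling and passes to the infimum; the observation that a different but proportional edge weighting only changes the constant is also right. For the expected bound, the chain of steps --- bound the separation probability at level $\ell$, bound the conditional tree distance by a geometric sum $O(\sqrt{d}\,s_\ell)$, sum over levels with the level cutoff $L_{\max}=O(\log\Delta)$, then lift through Fubini against the Euclidean-optimal coupling --- is exactly the Indyk--Thaper analysis. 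Two small remarks. First, what you call ``summation by parts'' is not needed: since $\{\ell^*=\ell\}\subseteq A_\ell$, one has $\Pr[\ell^*=\ell]\le\Pr[A_\ell]$ directly, so the sum over $\Pr[\ell^*=\ell]\,s_\ell$ is trivially at most the sum over $\Pr[A_\ell]\,s_\ell$. Second, your union bound over coordinates actually gives $\Pr[A_\ell]\le\min(1,\|x-y\|_1/s_\ell)\le\min(1,\sqrt{d}\,r/s_\ell)$, which is a factor $\sqrt{d}$ sharper than the $dr/s_\ell$ you wrote and would shave your final constant to $D_{\mathrm{OT}}=O(d)$; this only helps you. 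Neither point affects correctness.
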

\textbf{Upper bound.} We first prove that $\text{GKR}_{\text{euc}}(\mu, \nu) \le C \cdot \text{GKR}_{\text{tree}}(\mu, \nu)$. Let $C = \max(1, C_{\text{OT}})$ and $\pi^*_{\text{tree}}$ be the optimal coupling of $\text{GKR}_{\text{tree}}(\mu, \nu)$.
\begin{align*}
    &C \cdot \text{GKR}_{\text{tree}}(\mu, \nu) \\
    &= C \int_{\mathcal{X} \times \mathcal{X}} c_{\text{tree}} d\pi^*_{\text{tree}} + C \int_{\mathcal{X}} \lambda_d d(\mu - \text{proj}_1\pi^*_{\text{tree}}) + C \int_{\mathcal{X}} \lambda_c d(\nu - \text{proj}_2\pi^*_{\text{tree}}) \\
    &= C \cdot \text{OT}_{\text{tree}}(\text{proj}_1\pi^*_{\text{tree}}, \text{proj}_2\pi^*_{\text{tree}}) + C \int_{\mathcal{X}} \lambda_d d(\mu - \text{proj}_1\pi^*_{\text{tree}}) + C \int_{\mathcal{X}} \lambda_c d(\nu - \text{proj}_2\pi^*_{\text{tree}}) \\
    &\ge C_{\text{OT}} \cdot \text{OT}_{\text{tree}}(\text{proj}_1\pi^*_{\text{tree}}, \text{proj}_2\pi^*_{\text{tree}}) + \int_{\mathcal{X}} \lambda_d d(\mu - \text{proj}_1\pi^*_{\text{tree}}) + \int_{\mathcal{X}} \lambda_c d(\nu - \text{proj}_2\pi^*_{\text{tree}}) \\
    &\ge \text{OT}_{\text{euc}}(\text{proj}_1\pi^*_{\text{tree}}, \text{proj}_2\pi^*_{\text{tree}}) + \int_{\mathcal{X}} \lambda_d d(\mu - \text{proj}_1\pi^*_{\text{tree}}) + \int_{\mathcal{X}} \lambda_c d(\nu - \text{proj}_2\pi^*_{\text{tree}}) \\
    &\ge \text{GKR}_{\text{euc}}(\mu, \nu)
\end{align*}
\textbf{Lower bound.} We then prove that $\mathbb{E}_\mathcal{T}[\text{GKR}_{\text{tree}}(\mu, \nu)] \le D \cdot \text{GKR}_{\text{euc}}(\mu, \nu) \log \Delta$. Let $D = \max(\frac{1}{\log \Delta}, D_{\text{OT}})$ and $\pi^*_{\text{euc}}$ be the optimal coupling of $\text{GKR}_{\text{euc}}(\mu, \nu)$.
\begin{align*}
    & D \cdot \text{GKR}_{\text{euc}}(\mu, \nu) \log \Delta \\
    &= D \log \Delta \int_{\mathcal{X} \times \mathcal{X}} c_{\text{euc}} d\pi^*_{\text{euc}} + D \log \Delta \int_{\mathcal{X}} \lambda_d d(\mu - \text{proj}_1\pi^*_{\text{euc}}) + D \log \Delta \int_{\mathcal{X}} \lambda_c d(\nu - \text{proj}_2\pi^*_{\text{euc}}) \\
    &= D \log \Delta \cdot \text{OT}_{\text{euc}}(\text{proj}_1\pi^*_{\text{euc}}, \text{proj}_2\pi^*_{\text{euc}}) + D \log \Delta \int_{\mathcal{X}} \lambda_d d(\mu - \text{proj}_1\pi^*_{\text{euc}}) + D \log \Delta \int_{\mathcal{X}} \lambda_c d(\nu - \text{proj}_2\pi^*_{\text{euc}}) \\
    &\ge D_{\text{OT}} \log \Delta \cdot \text{OT}_{\text{euc}}(\text{proj}_1\pi^*_{\text{euc}}, \text{proj}_2\pi^*_{\text{euc}}) + \int_{\mathcal{X}} \lambda_d d(\mu - \text{proj}_1\pi^*_{\text{euc}}) + \int_{\mathcal{X}} \lambda_c d(\nu - \text{proj}_2\pi^*_{\text{euc}}) \\
    &\ge \mathbb{E}_\mathcal{T}[\text{OT}_{\text{tree}}(\text{proj}_1\pi^*_{\text{tree}}, \text{proj}_2\pi^*_{\text{tree}})]  + \int_{\mathcal{X}} \lambda_d d(\mu - \text{proj}_1\pi^*_{\text{euc}}) + \int_{\mathcal{X}} \lambda_c d(\nu - \text{proj}_2\pi^*_{\text{euc}}) \\
    &= \mathbb{E}_\mathcal{T}[\text{OT}_{\text{tree}}(\text{proj}_1\pi^*_{\text{tree}}, \text{proj}_2\pi^*_{\text{tree}})  + \int_{\mathcal{X}} \lambda_d d(\mu - \text{proj}_1\pi^*_{\text{euc}}) + \int_{\mathcal{X}} \lambda_c d(\nu - \text{proj}_2\pi^*_{\text{euc}})] \\
    &\ge \mathbb{E}_\mathcal{T}[\text{GKR}_{\text{tree}}(\mu, \nu)]
\end{align*}
\end{prooftn}

\begin{proofln}{\ref{lemma: piecewise}}
In a leaf node $v$, $t_v$ is convex from Eq.(\ref{eq: basecase}). If $t_x$ is convex, $e_x$ is convex from Eq. (\ref{eq: rec_e}) because both $|x| \cdot w(v, p(v))$ and $t_x$ are convex. If $e_v$ and $e_u$ are convex, $t_{p, x}$ is convex from Eq. (\ref{eq: rec_t}). Therefore, $t_v$ and $e_v$ are convex by induction. Next, we prove that $t_v$ and $e_v$ are piece-wise constant with at most $3 |\mathcal{X}_v|$ segments. In a leaf node $v$, $|B(t_v)| = 2 \le 3$ from Eq. (\ref{eq: basecase}) and $|B(e_v)| \le |B(t_v)| + 1 = 3 \le 3$ from Eq. (\ref{eq: rec_e}). In an internal node $p$ with children $v$ and $u$, $|B(t_p)| = |B(t_v)| + |B(t_u)| \le 3 (|\mathcal{X}_p| - 1)$ from Eq. (\ref{eq: basecase}) and the inductive hypothesis and $|B(e_p)| \le |B(t_p)| + 1 \le 3 |\mathcal{X}_p| - 2$ from Eq. (\ref{eq: rec_e}). Therefore, $t_v$ and $e_v$ are piece-wise constant with at most $3 |\mathcal{X}_v|$ segments.
\end{proofln}

\begin{prooftn}{\ref{thm: quasilinear}}
In each node, computing Eq. (\ref{eq: basecase}) (i.e., Line 4 in Algorithm \ref{algo}) requires $O(1)$ time. Computing Eq. (\ref{eq: rec_e}) (i.e., Line 7 in Algorithm \ref{algo}) requires $O(\log |\mathcal{X}_x|) \lesssim O(\log n)$ time because adding a constant to elements of a range of a balanced binary tree requires logarithmic time, and the number of elements in the balanced binary tree is $O(|\mathcal{X}_x|)$ from Lemma \ref{lemma: piecewise}. Therefore, computing Eq. (\ref{eq: rec_e}) require $O(n \log n)$ time in total. Due to the weighted-union heuristics, there are $O(n \log n)$ insertion operations to compute Eq. (\ref{eq: rec_t}) (i.e., Line 11 in Algorithm \ref{algo}) in total. Because an insertion operation of a balanced binary tree requires logarithmic time, computing Eq. (\ref{eq: rec_t}) requires $O(n \log^2 n)$ time in total. Therefore, the total time complexity is $O(n \log^2 n)$.
\end{prooftn}

\section{Triangle Inequality}

We prove that the triangle inequality holds if the two conditions mentioned in the main text hold. Intuitively, it is cheaper to transport/create/destruct mass directly than to transport them to intermediate places or to create/destruct at intermediate places. We provide a proof for the discrete case. The continuous case can be proved similarly.

\begin{theorem}
$\text{GKR}(\mu, \eta) \le \text{GKR}(\mu, \nu) + \text{GKR}(\nu, \eta)$ holds for any $\mu = \sum_{x \in \mathcal{X}} a_x \delta_x$, $\nu = \sum_{x \in \mathcal{X}} b_x \delta_x$, $\eta = \sum_{x \in \mathcal{X}} c_x \delta_x$ if (1) cost $d$ is a metric and (2) $\lambda_d(x) \le c(x, y) \lambda_d(y)$ and $\lambda_c(y) \le \lambda_c(x) + c(x, y)$ hold for any $x, y \in \mathcal{X}$.
\end{theorem}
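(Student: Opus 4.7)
The plan is to prove the triangle inequality by gluing together optimal sub-couplings of the two right-hand side GKR values. Let $\pi_1 \in \mathcal{U}^-(\mu, \nu)$ and $\pi_2 \in \mathcal{U}^-(\nu, \eta)$ attain the two optima, and set $p_1(y) = (\text{proj}_2 \pi_1)_y$, $q_2(y) = (\text{proj}_1 \pi_2)_y$, and $m_y = \min(p_1(y), q_2(y))$. Intuitively, only $m_y$ units of mass at $y$ can flow through from $\mu$ to $\eta$; the excess $p_1(y) - m_y$ is $\pi_1$-mass that arrives at $y$ but is not forwarded by $\pi_2$, and $q_2(y) - m_y$ is $\pi_2$-mass that leaves $y$ but was not delivered by $\pi_1$. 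I would rescale $\pi_1'(x,y) := \pi_1(x,y) m_y / p_1(y)$ and $\pi_2'(y,z) := \pi_2(y,z) m_y / q_2(y)$ (with the convention $0/0 = 0$), and define the glued sub-coupling
\[ \pi(x, z) = \sum_{y\,:\,m_y > 0} \frac{\pi_1'(x,y)\,\pi_2'(y,z)}{m_y}. \]
A direct check gives $\text{proj}_1 \pi \le \mu$ and $\text{proj}_2 \pi \le \eta$, so $\pi$ is an admissible competitor for $\text{GKR}(\mu, \eta)$.

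Next I would bound each of the three terms of $\text{GKR}(\mu, \eta)$ evaluated on $\pi$. For the transport part, the triangle inequality for $c$ yields $\sum_{x,z} c(x,z)\pi(x,z) \le \sum_{x,y} c(x,y) \pi_1'(x,y) + \sum_{y,z} c(y,z) \pi_2'(y,z)$, which falls short of $\sum c(x,y)\pi_1(x,y) + \sum c(y,z)\pi_2(y,z)$ by a slack equal to $\sum c(x,y)\pi_1(x,y)(1 - m_y/p_1(y)) + \sum c(y,z)\pi_2(y,z)(1 - m_y/q_2(y))$. For the destruction term, I decompose $a_x - (\text{proj}_1 \pi)_x = (a_x - (\text{proj}_1 \pi_1)_x) + \sum_y \pi_1(x,y)(1 - m_y/p_1(y))$ and apply condition (2) in the form $\lambda_d(x) \le c(x,y) + \lambda_d(y)$ to the extra sum. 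This produces the destruction cost $D_1$ of $\pi_1$, a $c$-term that exactly cancels the $\pi_1$ portion of the slack above, and a residual $\sum_y \lambda_d(y)(p_1(y) - m_y)$. The creation term is symmetric via $\lambda_c(z) \le \lambda_c(y) + c(y,z)$, producing $C_2$, the other half of the slack, and a residual $\sum_y \lambda_c(y)(q_2(y) - m_y)$.

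The final step is to dominate these two residuals by the unused $C_1 + D_2$ on the right-hand side. Since $m_y$ is a minimum, at each $y$ at most one of $p_1(y) - m_y$ and $q_2(y) - m_y$ is nonzero; in the nontrivial case $p_1(y) \ge q_2(y)$, the bound $p_1(y) \le b_y$ gives $p_1(y) - m_y \le b_y - q_2(y)$, so $\lambda_d(y)(p_1(y) - m_y) \le \lambda_d(y)(b_y - q_2(y))$, and the symmetric statement holds in the other case. Summing over $y$ bounds the two residuals by $C_1 + D_2$, and assembling all the bounds yields $\text{GKR}(\mu,\eta) \le T_1 + D_1 + C_1 + T_2 + D_2 + C_2 = \text{GKR}(\mu,\nu) + \text{GKR}(\nu,\eta)$.

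The main obstacle I expect is bookkeeping rather than any single tricky inequality: the scaling factors $m_y/p_1(y)$ and $m_y/q_2(y)$ defining $\pi_1'$ and $\pi_2'$ must be chosen so that the $c$-contributions produced by applying condition (2) to the unmatched mass line up exactly with the slack left over by the triangle-inequality bound on the transport cost. Once that alignment is set up, the remaining inequalities are direct; the continuous case follows by the same argument after replacing sums with integrals and disintegrating $\pi_1$ and $\pi_2$ along the shared marginal at $y$.
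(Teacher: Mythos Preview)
Your proposal is correct and follows essentially the same approach as the paper: the glued sub-coupling $\pi(x,z)=\sum_y \pi_1'(x,y)\pi_2'(y,z)/m_y$ is algebraically identical to the paper's $R_{xz}=\sum_y P_{x,y}Q_{y,z}/\max(\text{proj}_2 P_y,\text{proj}_1 Q_y)$ (since $m_y/(p_1(y)q_2(y))=1/\max(p_1(y),q_2(y))$), and the subsequent bounding via the triangle inequality for $c$, condition~(2), and the residual estimate $p_1(y)-m_y\le b_y-q_2(y)$ mirrors the paper's computation line by line. The only cosmetic difference is that you include the points with $p_1(y)=q_2(y)>0$ in the gluing while the paper omits them, which changes nothing in the argument.
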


\begin{proof}
Let $P$ and $Q$ be the optimal transportation matrix for $\text{GKR}(\mu, \nu)$ and $\text{GKR}(\nu, \eta)$. Thus,
\begin{align*}
    a_x &\ge \sum_{y \in \mathcal{X}} P_{x, y} = \text{proj}_1P_x \quad (\forall x \in \mathcal{X}),\\
    b_y &\ge \sum_{x \in \mathcal{X}} P_{x, y} = \text{proj}_2P_y \quad (\forall y \in \mathcal{X}),\\
    \text{GKR}(\mu, \nu) &= \sum_{x, y \in \mathcal{X}} P_{x, y} d(x, y) + \sum_{x \in \mathcal{X}} \lambda_d(x) \left(a_x - \sum_{y \in \mathcal{X}} P_{x, y}\right) + \sum_{y \in \mathcal{X}} \lambda_c(y) \left(b_y - \sum_{x \in \mathcal{X}} P_{x, y}\right), \\
    b_y &\ge \sum_{z \in \mathcal{X}} Q_{y, z} = \text{proj}_1Q_y \quad (\forall y \in \mathcal{X}),\\
    c_z &\ge \sum_{y \in \mathcal{X}} Q_{y, z} = \text{proj}_2Q_z \quad (\forall z \in \mathcal{X}),\\
    \text{GKR}(\nu, \eta) &= \sum_{y, z \in \mathcal{X}} Q_{y, z} d(y, z) + \sum_{y \in \mathcal{X}} \lambda_d(y) \left(b_y - \sum_{z \in \mathcal{X}} Q_{y, z}\right) +  \sum_{z \in \mathcal{X}} \lambda_c(z) \left(c_z - \sum_{y \in \mathcal{X}} Q_{y, z}\right). \\
\end{align*}
Let $\mathcal{P} = \{ y \in \mathcal{X} \mid \text{proj}_2P_y > \text{proj}_1Q_y > 0 \}$ and $\mathcal{Q} = \{ y \in \mathcal{X} \mid \text{proj}_1Q_y > \text{proj}_2P_y > 0 \}$, and
\[ R_{xz} = \sum_{y \in \mathcal{P} \cup \mathcal{Q}} \frac{P_{x, y} Q_{y, z}}{\max(\text{proj}_2P_y, \text{proj}_1Q_y)}. \]
Then,
\begin{align*}
    \text{GKR}(\mu, \eta) &\le \sum_{x, z \in \mathcal{X}} R_{x, z} d(x, z) + \sum_{x \in \mathcal{X}} \lambda_d(x) \left(a_x - \sum_{z \in \mathcal{X}} R_{x, z}\right) + \sum_{z \in \mathcal{X}} \lambda_c(z) \left(c_z - \sum_{x \in \mathcal{X}} R_{x, z}\right) \\
    &= \sum_{x, z \in \mathcal{X}} \sum_{y \in \mathcal{P}} \frac{P_{x, y} Q_{y, z}}{\text{proj}_2P_y} d(x, z) + \sum_{x, z \in \mathcal{X}} \sum_{y \in \mathcal{Q}} \frac{P_{x, y} Q_{y, z}}{\text{proj}_1Q_y} d(x, z) \\ & \qquad \qquad + \sum_{x \in \mathcal{X}} \lambda_d(x) \left(a_x - \sum_{z \in \mathcal{X}} \left(\sum_{y \in \mathcal{P}} \frac{P_{x, y} Q_{y, z}}{\text{proj}_2P_y} + \sum_{y \in \mathcal{Q}} \frac{P_{x, y} Q_{y, z}}{\text{proj}_1Q_y} \right)\right) \\ & \qquad \qquad + \sum_{z \in \mathcal{X}} \lambda_c(z) \left(c_z - \sum_{x \in \mathcal{X}} \left(\sum_{y \in \mathcal{P}} \frac{P_{x, y} Q_{y, z}}{\text{proj}_2P_y} + \sum_{y \in \mathcal{Q}} \frac{P_{x, y} Q_{y, z}}{\text{proj}_1Q_y} \right)\right) \\
    &\le \sum_{x, z \in \mathcal{X}} \sum_{y \in \mathcal{P}} \frac{P_{x, y} Q_{y, z}}{\text{proj}_2P_y} (d(x, y) + d(y, z)) + \sum_{x, z \in \mathcal{X}} \sum_{y \in \mathcal{Q}} \frac{P_{x, y} Q_{y, z}}{\text{proj}_1Q_y} (d(x, y) + d(y, z)) \\ & \qquad \qquad + \sum_{x \in \mathcal{X}} \lambda_d(x) \left(a_x - \sum_{z \in \mathcal{X}} \left(\sum_{y \in \mathcal{P}} \frac{P_{x, y} Q_{y, z}}{\text{proj}_2P_y} + \sum_{y \in \mathcal{Q}} \frac{P_{x, y} Q_{y, z}}{\text{proj}_1Q_y} \right)\right) \\ & \qquad \qquad + \sum_{z \in \mathcal{X}} \lambda_c(z) \left(c_z - \sum_{x \in \mathcal{X}} \left(\sum_{y \in \mathcal{P}} \frac{P_{x, y} Q_{y, z}}{\text{proj}_2P_y} + \sum_{y \in \mathcal{Q}} \frac{P_{x, y} Q_{y, z}}{\text{proj}_1Q_y} \right)\right) \\
    &= \sum_{x \in \mathcal{X}} \sum_{y \in \mathcal{P}} \frac{\text{proj}_1Q_{y}}{\text{proj}_2P_y} P_{x, y} d(x, y) + \sum_{z \in \mathcal{X}} \sum_{y \in \mathcal{P}} Q_{y, z} d(y, z) \\ & \qquad \qquad + \sum_{x \in \mathcal{X}} \sum_{y \in \mathcal{Q}} P_{x, y} d(x, y) + \sum_{z \in \mathcal{X}} \sum_{y \in \mathcal{Q}} \frac{\text{proj}_1P_y}{\text{proj}_1Q_y} Q_{y, z} d(y, z) \\ & \qquad \qquad + \sum_{x \in \mathcal{X}} \lambda_d(x) \left(a_x - \sum_{y \in \mathcal{P}} \frac{\text{proj}_1Q_y}{\text{proj}_2P_y} P_{x, y} - \sum_{y \in \mathcal{Q}} P_{x, y} \right)\\ & \qquad \qquad + \sum_{z \in \mathcal{X}} \lambda_c(z) \left(c_z - \sum_{y \in \mathcal{P}} Q_{y, z} - \sum_{y \in \mathcal{Q}} \frac{\text{proj}_2P_y}{\text{proj}_1Q_y} Q_{y, z} \right) \\
    &= \sum_{x \in \mathcal{X}} \sum_{y \in \mathcal{P}} \frac{\text{proj}_1Q_{y}}{\text{proj}_2P_y} P_{x, y} d(x, y) + \sum_{z \in \mathcal{X}} \sum_{y \in \mathcal{P}} Q_{y, z} d(y, z) \\ & \qquad \qquad + \sum_{x \in \mathcal{X}} \sum_{y \in \mathcal{Q}} P_{x, y} d(x, y) + \sum_{z \in \mathcal{X}} \sum_{y \in \mathcal{Q}} \frac{\text{proj}_1P_y}{\text{proj}_1Q_y} Q_{y, z} d(y, z) \\ & \qquad \qquad + \sum_{x \in \mathcal{X}} \lambda_d(x) \left(a_x - \sum_{y \in \mathcal{X}} P_{x, y} \right) + \sum_{z \in \mathcal{X}} \lambda_c(z) \left(c_z - \sum_{y \in \mathcal{X}} Q_{y, z}\right) \\ & \qquad \qquad + \sum_{x \in \mathcal{X}} \sum_{y \in \mathcal{P}} \lambda_d(x) \left(1 - \frac{\text{proj}_1Q_y}{\text{proj}_2P_y}\right) P_{x, y} + \sum_{z \in \mathcal{X}}  \sum_{y \in \mathcal{Q}} \lambda_c(z) \left(1 - \frac{\text{proj}_2P_y}{\text{proj}_1Q_y} \right) Q_{y, z} \\
    &\le \sum_{x \in \mathcal{X}} \sum_{y \in \mathcal{P}} \frac{\text{proj}_1Q_{y}}{\text{proj}_2P_y} P_{x, y} d(x, y) + \sum_{z \in \mathcal{X}} \sum_{y \in \mathcal{P}} Q_{y, z} d(y, z) \\ & \qquad \qquad + \sum_{x \in \mathcal{X}} \sum_{y \in \mathcal{Q}} P_{x, y} d(x, y) + \sum_{z \in \mathcal{X}} \sum_{y \in \mathcal{Q}} \frac{\text{proj}_1P_y}{\text{proj}_1Q_y} Q_{y, z} d(y, z) \\ & \qquad \qquad + \sum_{x \in \mathcal{X}} \lambda_d(x) \left(a_x - \sum_{y \in \mathcal{X}} P_{x, y} \right) + \sum_{z \in \mathcal{X}} \lambda_c(z) \left(c_z - \sum_{y \in \mathcal{X}} Q_{y, z}\right) \\ & \qquad \qquad + \sum_{x \in \mathcal{X}} \sum_{y \in \mathcal{P}} (d(x, y) + \lambda_d(y)) \left(1 - \frac{\text{proj}_1Q_y}{\text{proj}_2P_y}\right) P_{x, y} \\ & \qquad \qquad + \sum_{z \in \mathcal{X}}  \sum_{y \in \mathcal{Q}} (\lambda_c(y) + d(y, z)) \left(1 - \frac{\text{proj}_2P_y}{\text{proj}_1Q_y} \right) Q_{y, z} \\
    &= \sum_{x \in \mathcal{X}} \sum_{y \in \mathcal{P}} P_{x, y} d(x, y) + \sum_{z \in \mathcal{X}} \sum_{y \in \mathcal{P}} Q_{y, z} d(y, z) \\ & \qquad \qquad + \sum_{x \in \mathcal{X}} \sum_{y \in \mathcal{Q}} P_{x, y} d(x, y) + \sum_{z \in \mathcal{X}} \sum_{y \in \mathcal{Q}} Q_{y, z} d(y, z) \\ & \qquad \qquad + \sum_{x \in \mathcal{X}} \lambda_d(x) \left(a_x - \sum_{y \in \mathcal{X}} P_{x, y} \right) + \sum_{z \in \mathcal{X}} \lambda_c(z) \left(c_z - \sum_{y \in \mathcal{X}} Q_{y, z}\right) \\ & \qquad \qquad + \sum_{x \in \mathcal{X}} \sum_{y \in \mathcal{P}} \lambda_d(y) \left(1 - \frac{\text{proj}_1Q_y}{\text{proj}_2P_y}\right) P_{x, y} \\ & \qquad \qquad + \sum_{z \in \mathcal{X}}  \sum_{y \in \mathcal{Q}} \lambda_c(y) \left(1 - \frac{\text{proj}_2P_y}{\text{proj}_1Q_y} \right) Q_{y, z} \\ 
    &= \sum_{x \in \mathcal{X}} \sum_{y \in \mathcal{X}} P_{x, y} d(x, y) + \sum_{z \in \mathcal{X}} \sum_{y \in \mathcal{X}} Q_{y, z} d(y, z) \\ & \qquad \qquad + \sum_{x \in \mathcal{X}} \lambda_d(x) \left(a_x - \sum_{y \in \mathcal{X}} P_{x, y} \right) + \sum_{z \in \mathcal{X}} \lambda_c(z) \left(c_z - \sum_{y \in \mathcal{X}} Q_{y, z}\right) \\ & \qquad \qquad + \sum_{y \in \mathcal{P}} \lambda_d(y) (\text{proj}_2P_y - \text{proj}_1Q_y) + \sum_{y \in \mathcal{Q}} \lambda_c(y) (\text{proj}_1Q_y - \text{proj}_2P_y) \\ 
    &\le \sum_{x \in \mathcal{X}} \sum_{y \in \mathcal{X}} P_{x, y} d(x, y) + \sum_{z \in \mathcal{X}} \sum_{y \in \mathcal{X}} Q_{y, z} d(y, z) \\ & \qquad \qquad + \sum_{x \in \mathcal{X}} \lambda_d(x) \left(a_x - \sum_{y \in \mathcal{X}} P_{x, y} \right) + \sum_{z \in \mathcal{X}} \lambda_c(z) \left(c_z - \sum_{y \in \mathcal{X}} Q_{y, z}\right) \\ & \qquad \qquad + \sum_{y \in \mathcal{X}} \lambda_d(y) (b_y - \text{proj}_1Q_y) + \sum_{y \in \mathcal{X}} \lambda_c(y) (b_y - \text{proj}_2P_y) \\ 
    &= \text{GKR}(\mu, \nu) + \text{GKR}(\nu, \eta).
\end{align*}
\end{proof}

\section{Identity of Indiscernibles}

We prove that $\text{GKR}(\mu, \nu) = 0$ iff $\mu = \nu$ under mild conditions in the discrete case.

\begin{theorem}
$\text{GKR}(\mu, \nu) = 0$ iff $\mu = \nu$ holds for any $\mu = \sum_{x \in \mathcal{X}} a_x \delta_x$, $\nu = \sum_{x \in \mathcal{X}} b_x \delta_x$ if (1) $c(x, x) = 0$ for all $x \in \mathcal{X}$ and $c(x, y) > 0$ if $x \neq y$ and (2) $\lambda_c(x) > 0$ and $\lambda_d(x) > 0$ for all $x \in \mathcal{X}$.
\end{theorem}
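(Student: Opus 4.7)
The plan is to prove both implications directly from the definition of the GKR distance, exploiting the nonnegativity of each of its three summands together with the strict positivity conditions (1) and (2).

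For the easy direction $\mu=\nu \Rightarrow \text{GKR}(\mu,\nu)=0$, I would exhibit an explicit feasible coupling achieving zero cost: take $\pi_{x,y} = a_x$ if $x=y$ and $0$ otherwise. Then $\text{proj}_1 \pi = \text{proj}_2\pi = \mu = \nu$, so $\pi \in \mathcal{U}^-(\mu,\nu)$, and by assumption (1) we have $c(x,x)=0$, so the transport term contributes $0$ and the creation/destruction terms vanish as $\mu - \text{proj}_1\pi = 0 = \nu - \text{proj}_2\pi$. Since $\text{GKR}\ge 0$ always, this gives $\text{GKR}(\mu,\nu)=0$.

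For the nontrivial direction $\text{GKR}(\mu,\nu)=0 \Rightarrow \mu=\nu$, let $\pi^\ast$ be an optimal subcoupling. Each of the three terms in the objective is nonnegative: $c\ge 0$ and $\pi^\ast \ge 0$; both $\mu-\text{proj}_1\pi^\ast$ and $\nu-\text{proj}_2\pi^\ast$ are nonnegative measures by feasibility; and $\lambda_d,\lambda_c\ge 0$. Since the total sum is zero, each term must vanish. From vanishing of the transport term and condition (1) (i.e., $c(x,y)>0$ for $x\neq y$), I conclude that $\pi^\ast_{x,y}=0$ whenever $x\neq y$, so $\pi^\ast$ is supported on the diagonal. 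From vanishing of the destruction and creation terms combined with condition (2) (strict positivity of $\lambda_d,\lambda_c$), I get $\text{proj}_1\pi^\ast = \mu$ and $\text{proj}_2\pi^\ast = \nu$ coordinatewise. But because $\pi^\ast$ is diagonal, $(\text{proj}_1\pi^\ast)_x = \pi^\ast_{x,x} = (\text{proj}_2\pi^\ast)_x$ for every $x$, so $a_x=b_x$ for all $x\in\mathcal{X}$, yielding $\mu=\nu$.

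No step is truly difficult — the whole argument is an exercise in reading off consequences of a nonnegative sum being zero — but the only place where care is required is the interplay between the three vanishing conditions: one must combine the diagonal-support conclusion from the transport term with the matching-marginal conclusions from the two penalty terms before one can identify $a_x$ with $b_x$. I would structure the write-up as two short paragraphs, one per direction, making the nonnegativity-of-each-summand observation explicit so the use of both hypotheses (1) and (2) is transparent.
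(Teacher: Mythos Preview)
Your proposal is correct and matches the paper's proof essentially line for line: both directions use the same diagonal coupling for $\mu=\nu\Rightarrow\text{GKR}=0$, and for the converse both take an optimal sub-coupling, use nonnegativity to force each of the three summands to vanish, and then combine the diagonal-support conclusion with the exact-marginal conclusion to obtain $a_x=b_x$. The only cosmetic difference is the order in which the two consequences are drawn (you deduce diagonal support before marginal equality, the paper does the reverse), which is immaterial.
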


\begin{proof}
Suppose $\mu = \nu$. Let $P_{xy} = 0$ if $x \neq y$ and $P_{xx} = a_x (= b_x)$. Then, $P$ is a valid transportation matrix and
\begin{align*}
&\sum_{x, y \in \mathcal{X}} P_{xy} c(x, y) + \sum_{x \in \mathcal{X}} \lambda_d(x) \left(a_x - \sum_{y \in \mathcal{X}} P_{x, y}\right) + \sum_{y \in \mathcal{X}} \lambda_c(y) \left(b_y - \sum_{x \in \mathcal{X}} P_{x, y}\right) \\
& \quad = \sum_{x \in \mathcal{X}} a_x c(x, x) + \sum_{x \in \mathcal{X}} \lambda_d(x) (a_x - a_x) + \sum_{x \in \mathcal{X}} \lambda_c(x) (b_x - b_x) \\
& \quad = 0.
\end{align*}

Suppose $\text{GKR}(\mu, \nu) = 0$ and let $P$ be the optimal transportation matrix. Then, 
\begin{align*}
    a_x &\ge \sum_{y \in \mathcal{X}} P_{x, y} = \text{proj}_1P_x \quad (\forall x \in \mathcal{X}),\\
    b_y &\ge \sum_{x \in \mathcal{X}} P_{x, y} = \text{proj}_2P_y \quad (\forall y \in \mathcal{X}),\\
    \text{GKR}(\mu, \nu) &= \sum_{x, y \in \mathcal{X}} P_{x, y} c(x, y) + \sum_{x \in \mathcal{X}} \lambda_d(x) \left(a_x - \sum_{y \in \mathcal{X}} P_{x, y}\right) + \sum_{y \in \mathcal{X}} \lambda_c(y) \left(b_y - \sum_{x \in \mathcal{X}} P_{x, y}\right) = 0. \\
\end{align*}
Each term is zero because each term must be non-negative. In particular, 
\begin{align*}
    a_x &= \sum_{y \in \mathcal{X}} P_{x, y} \\
    b_y &= \sum_{x \in \mathcal{X}} P_{x, y},
\end{align*}
because $\lambda_d(x) > 0$ and $\lambda_c(x) > 0$ for all $x \in \mathcal{X}$. Because $c(x, y) > 0$ if $x \neq y$, $P_{x, y} = 0$ if $x \neq y$. Therefore, 
\begin{align*}
    a_x &= \sum_{y \in \mathcal{X}} P_{x, y} = P_{x, x} \\
    b_y &= \sum_{x \in \mathcal{X}} P_{x, y} = P_{y, y},
\end{align*}
which means that $a_x = b_x$ for all $x \in \mathcal{X}$ and that $\mu = \nu$.
\end{proof}

\section{Preprocessing for Analysis} \label{sec: preprocessing}

We discuss the validity of the assumption that the input is a binary tree with mass only in leaf nodes. First, we attach dummy nodes with no mass to nodes so that all internal nodes have at least two children. For each internal node $v$, we create a child $v'$ with the same mass as $v$, connect $v$ and $v'$ by an edge with weight $0$, and set the mass of $v$ as $0$. Then, for each internal node with more than two children, we create a new child $v'$, connect $v$ and $v'$ by an edge with weight $0$, and change the parent of two arbitrary children of $v$ to $v'$ recursively. Obviously, this transformation blows up the input size only linearly and does not change the GKR distance. Therefore, we can make the assumptions without loss of generality. 

\section{Scaling Quadtree} \label{sec: quadtree}

\begin{figure}[t]
  \begin{center}
    \includegraphics[width=\hsize]{./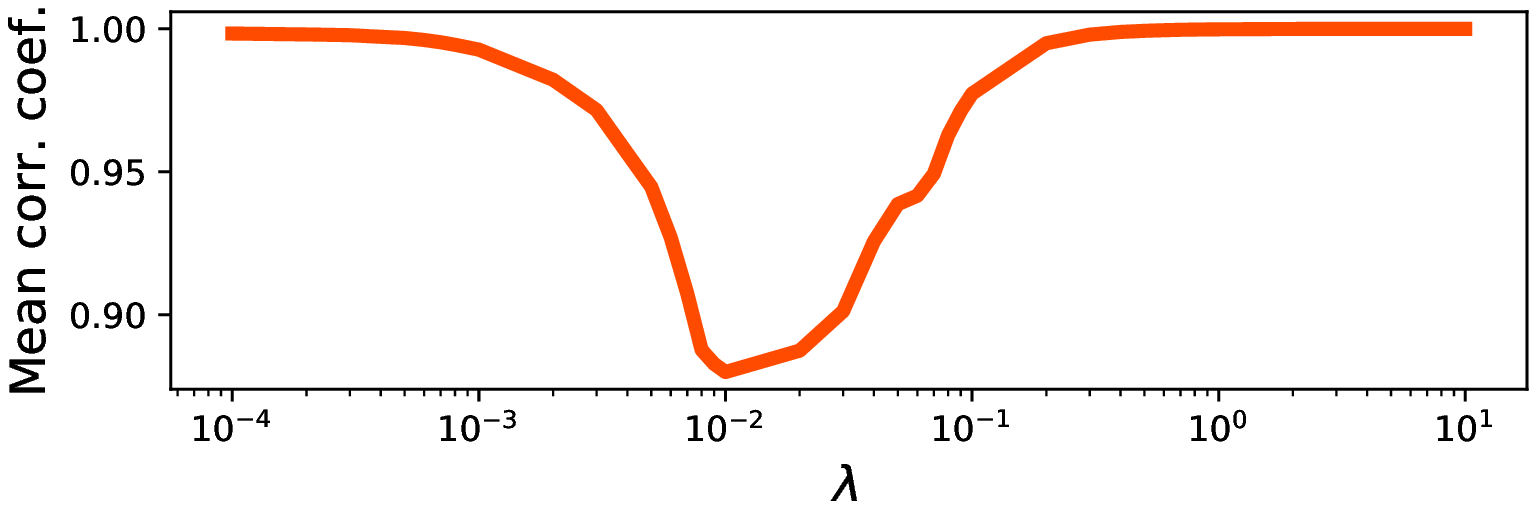}
    \includegraphics[width=\hsize]{./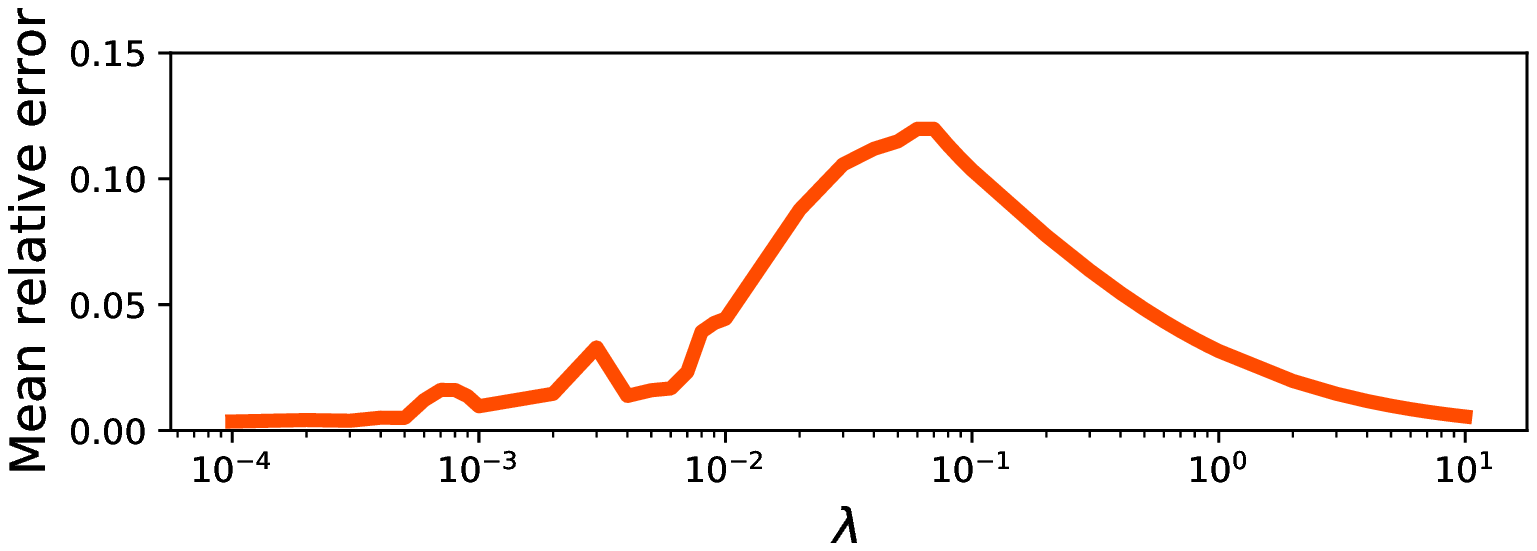}
  \end{center}
 \caption{(Top) Spearman's $\rho$ and (Bottom) Relative error with the scale parameter determined without training data but with a simple heuristic.}
 \label{fig: approx_global}
\end{figure}

In the approximation error experiments, we scale the edge length of the quadtree using a training dataset so that the relative error $|\text{GKR}_{\text{euc}} - \text{GKR}_{\text{tree}}| / \text{GKR}_{\text{euc}}$ is minimized. Specifically, we search the scale parameter $s$ such that GKR with cost $c(x, y) = s \cdot d_\mathcal{T}(x, y)$ minimizes the relative error. We use the ternary search to determine the scale parameter. Although the relative error is not necessarily unimodal, we found this was a good heuristic to determine the scale parameter efficiently.

We also conduct experiments without any training dataset. We determine the scale parameter by simple heuristics instead of the ternary search. Specifically, we sample some pairs $(x_1, y_1), \dots, (x_K, y_K)$ of nodes in the quadtree and use the average ratio $s = \frac{1}{K} \sum_{i = 1}^K \frac{d_{\text{euc}}(x, y)}{d_\mathcal{T}(x, y)}$ of the Euclidean distance to the tree distance as the scale parameter. Note that this value is independent of the parameters $\lambda$ of the GKR distance, while the ternary search is dependent. In the Chicago crime dataset, the ratio is $s \approx 0.18$. Figure \ref{fig: approx_global} shows the Spearman's rank correlation coefficient and the relative error of the same set of 990 pairs of measures as in the main experiment. This shows that the relative error is worse than that in Figure \ref{fig: approx} because the scales of two distances are different, but the Spearman's rank correlation coefficient is comparable to that in Figure \ref{fig: approx}. When one classifies or visualizes measures, the relative order is important. The high rank correlation indicates that this simple heuristic is beneficial when no training data are available.

\section{Approximation Accuracy in high dimensions} \label{sec: approx_experiment}

\begin{figure}
\begin{minipage}{0.49\hsize}
    \centering
    \includegraphics[width=\hsize]{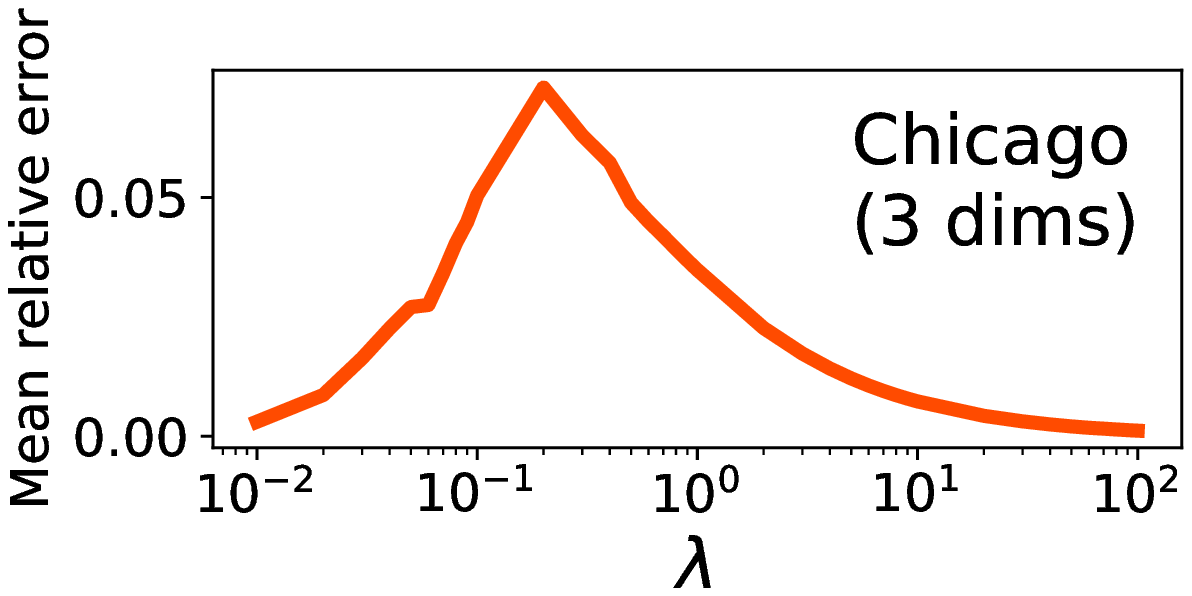}
\end{minipage}
\begin{minipage}{0.49\hsize}
    \centering
    \includegraphics[width=\hsize]{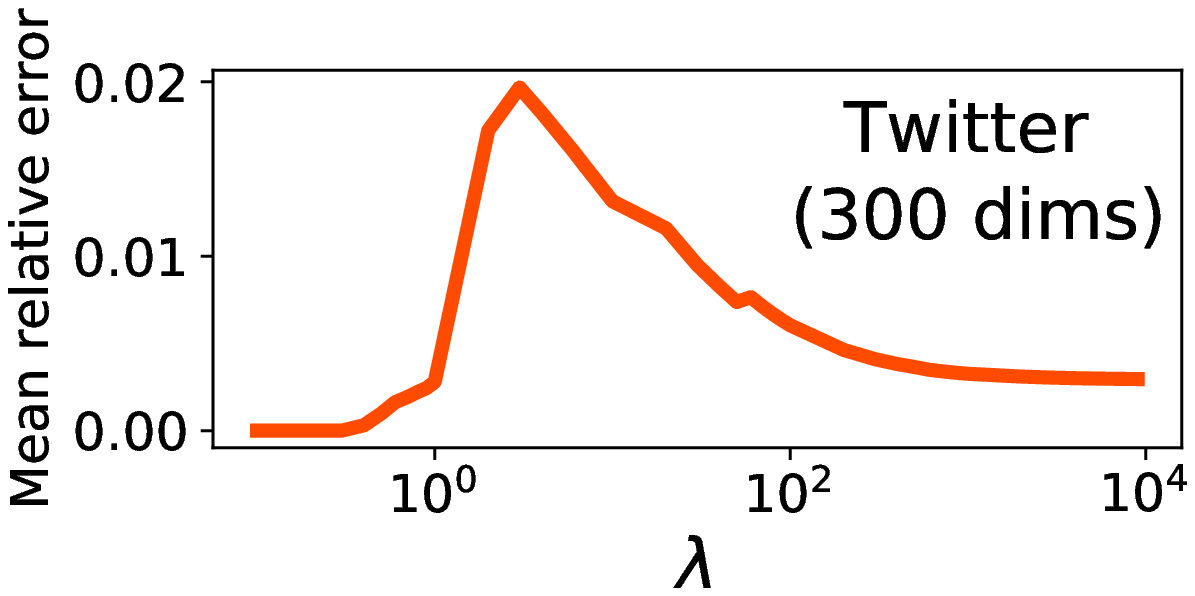}
\end{minipage}
    \caption{Accuracy in high dimensional cases.}
    \label{fig: error}
\end{figure}

The quadtree is \emph{NOT} restricted to two dimensions \cite{indyk2003fast} and can be constructed efficiently even in high dimensional spaces \cite{backurs2020scalable}. However, the quadtree may degrade its empirical performance in high dimensional cases \cite{le2019tree}. In that case, clustering-based trees can be used \cite{le2019tree}. We conduct additional experiments in high dimensional cases. First, we compute GKR for the Chicago Crime dataset with the additional time axis, using the \emph{quadtree}. Each mass represents a crime in the $3$-dimensional (longitude, latitude, time) space. We normalize each dimension so that each dimension has the same scale. Next, we compute GKR for the (unbalanced) Word Mover's Distance of the Twitter dataset \cite{le2019tree} using \emph{clustering-based trees}. Each measure represents a sentence, and each mass represents a word embedded in a $300$-dimensional space computed by a pre-trained language model. We compare our algorithm with the ground-truth GKR distance in the Euclidean space, as we did in Section \ref{sec: experiment}. Figure \ref{fig: error} shows that our algorithm can approximate high dimensional GKR accurately.

\section{Noise Robustness Experiments}

We confirm the unbalanced OT distance is robust to noise using shape comparison experiments. We use cluttered MNIST \cite{mnih2014recurrent} to this end, where the patch size is $4$ with no translation operation. Figure \ref{fig: cluttered_example} illustrates the dataset. For each $k = 0, 1, \dots, 16$, we generate $10$ shapes with $k$ clutters for each class. The ground space is a $2$-dimensional lattice $\{0, 1, \dots, 27\} \times \{0, 1, \dots, 27\}$, and the amount of mass in each point corresponds to the normalized brightness of the pixel. We use two baseline methods: the Sinkhorn algorithm and the tree sliced Wasserstein \cite{le2019tree}. The tree sliced Wasserstein corresponds to GKR with $\lambda_c = \lambda_d = \infty$. The Sinkhorn algorithm uses the Euclidean distance between two masses as the cost matrix. We use quadtree for the tree sliced Wasserstein and GKR. We set $\lambda_c = \lambda_d = 8$ for the GKR distance. We classify each digit by 1-NN using each distance. Figure \ref{fig: cluttered_results} plots the accuracy of each distance. The accuracies of all distances are comparable for $k = 0$ (i.e., no noise), but the GKR distance outperforms the other two methods for noisy shapes. This indicates that the GKR distance is robust to noise compared to the standard optimal transport distance.

\textbf{Comparison with Generalized Sinkhorn.} We also compared our algorithm with the generalized Sinkhorn algorithm \cite{chizat2018scaling} using the same setting. We compute \emph{Euclidean} UOT (i.e., without tree approximation) using the generalized Sinkhorn algorithm and carry out $1$-NN classification. We use the KL divergence as the regularizer and set the hyperparameters to $\varepsilon = 0.01$ and $\lambda = 0.01$. Figure \ref{fig: cluttered_generalized} shows that the generalized Sinkhorn is also robust to noise compared to standard optimal transport distances. Since the generalized Sinkhorn requires at least $O(n^2)$ time, its applicability is limited to thousand-scale datasets. In contrast, our algorithm is applicable to million-scale datasets keeping its performance.

\section{Document Classification Experiments}

To show further use-cases of GKR, we conducted document classification, following \cite{kusner2015word, le2019tree}. We use the Twitter dataset \cite{le2019tree}. Each mass represents a word, and each measure represents a document (i.e., a tweet). We use word2vec embedding trained on the Google News corpus for the word embedding. Thus, the ground space is a $300$-dimensional Euclidean space. We compute the GKR distance between pairs of documents exactly (without tree approximation), and carry out $1$-NN classification. Note that when $\lambda \to \infty$, GKR does not create nor delete mass due to high creation and destruction costs, thus GKR is reduced to standard OT (i.e., word mover's distance). We measure the accuracy using documents with more than three words in our evaluation. Figure \ref{fig: twitter_classification} reports the performances of our algorithm and the word mover's distance. This result indicates that GKR performs well in document classification. This is intuitively because GKR can ignore noisy words thanks to the mass creation and destruction mechanism. We hypothesize that this tendency can be extended to other OT applications. We will explore more applications in future work.

\begin{figure}[p] 
\centering
\includegraphics[width=\hsize]{./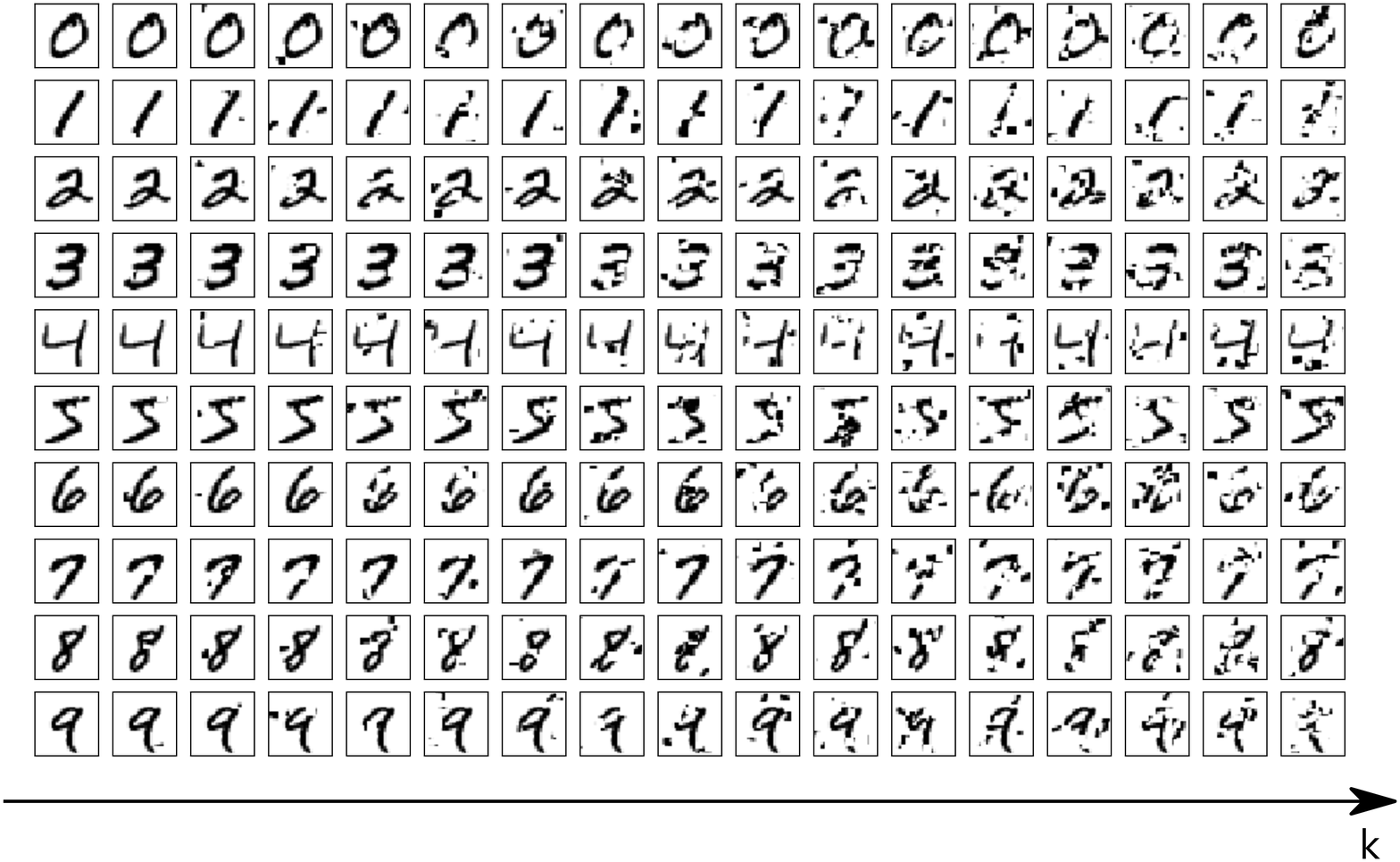}
\caption{Examples of the cluttered MNIST dataset.}
\label{fig: cluttered_example}
\end{figure}

\begin{figure}[p] 
\centering
\includegraphics[width=\hsize]{./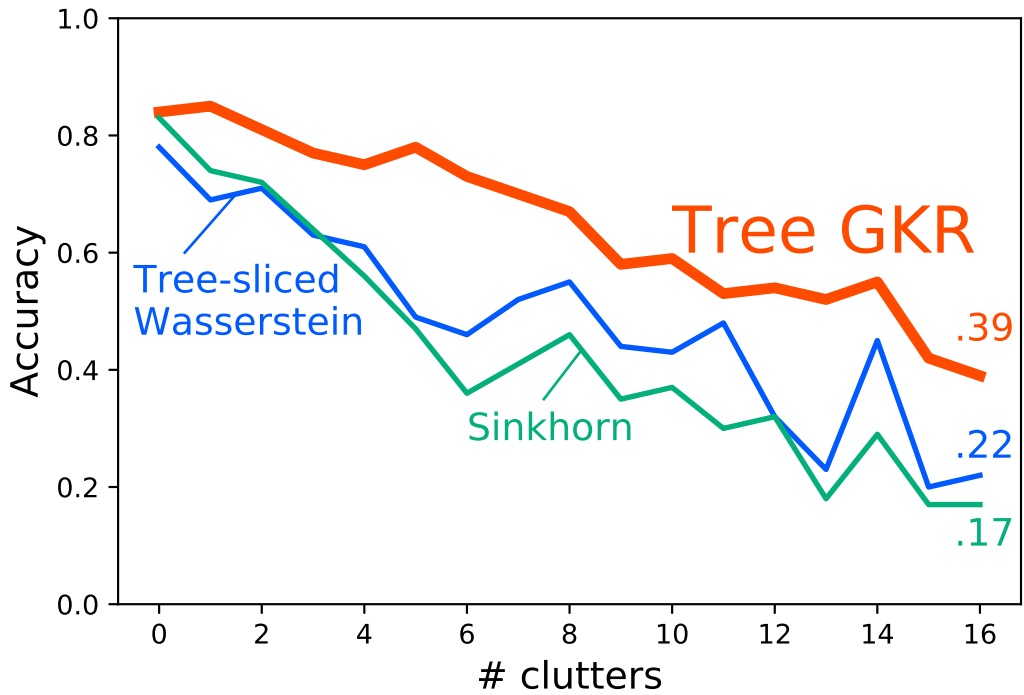}
\caption{$1$-NN classification accuracy for the cluttered MNIST dataset.}
\label{fig: cluttered_results}
\end{figure}

\begin{figure}[p] 
\centering
\includegraphics[width=\hsize]{./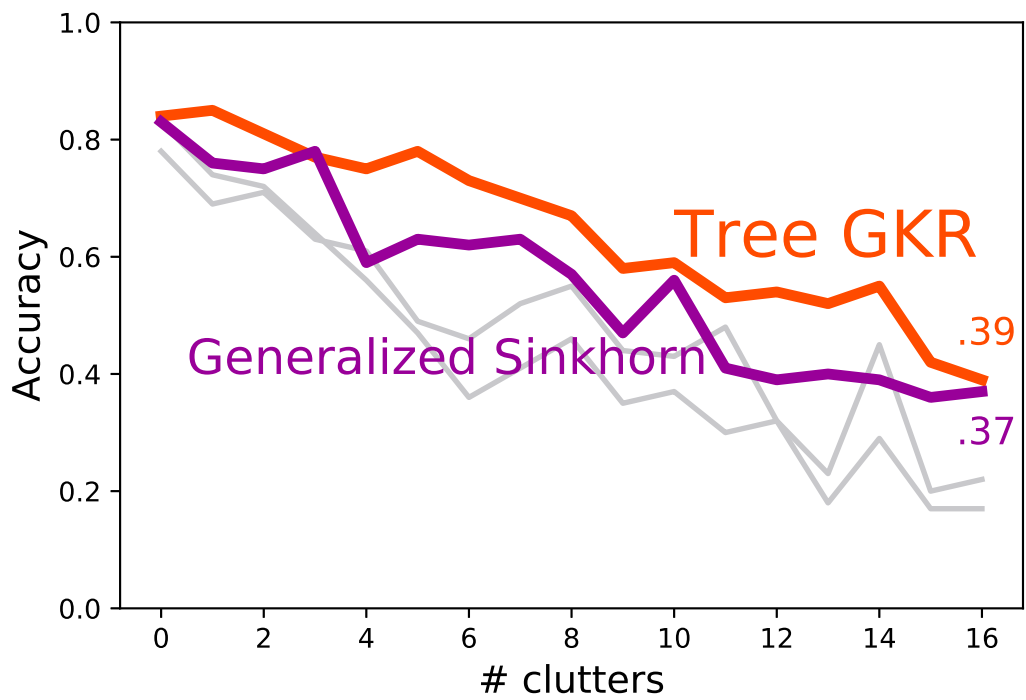}
\caption{Comparison with the generalized Sinkhorn algorithm using the cluttered MNIST dataset.}
\label{fig: cluttered_generalized}
\end{figure}

\begin{landscape}
\begin{table}[p]
    \centering
    \caption{$1$-NN classification accuracy for the cluttered MNIST dataset.}
    \begin{tabular}{lrrrrrrrrrrrrrrrrr}  \toprule
        k & 0 & 1 & 2 & 3 & 4 & 5 & 6 & 7 & 8 & 9 & 10 & 11 & 12 & 13 & 14 & 15 & 16 \\ \midrule
        Tree GKR & \textbf{0.84} & \textbf{0.85} & \textbf{0.81} & \textbf{0.77} & \textbf{0.75} & \textbf{0.78} & \textbf{0.73} & \textbf{0.7} & \textbf{0.67} & \textbf{0.58} & \textbf{0.59} & \textbf{0.53} & \textbf{0.54} & \textbf{0.52} & \textbf{0.55} & \textbf{0.42} & \textbf{0.39} \\
        Tree-sliced Wasserstein & 0.78 & 0.69 & 0.71 & 0.63 & 0.61 & 0.49 & 0.46 & 0.52 & 0.55 & 0.44 & 0.43 & 0.48 & 0.32 & 0.23 & 0.45 & 0.2 & 0.22 \\
        Sinkhorn & \textbf{0.83} & 0.74 & 0.72 & 0.64 & 0.56 & 0.47 & 0.36 & 0.41 & 0.46 & 0.35 & 0.37 & 0.3 & 0.32 & 0.18 & 0.29 & 0.17 & 0.17 \\
        generalized Sinkhorn & \textbf{0.83} & 0.76 & 0.75 & \textbf{0.78} & 0.59 & 0.63 & 0.62 & 0.63 & 0.57 & 0.47 & \textbf{0.56} & 0.41 & 0.39 & 0.40 & 0.39 & 0.36 & \textbf{0.37} \\\bottomrule
    \end{tabular}
    \label{tab: cluttered_results}
\end{table}
\end{landscape}

\begin{figure}[p] 
\centering
\includegraphics[width=\hsize]{./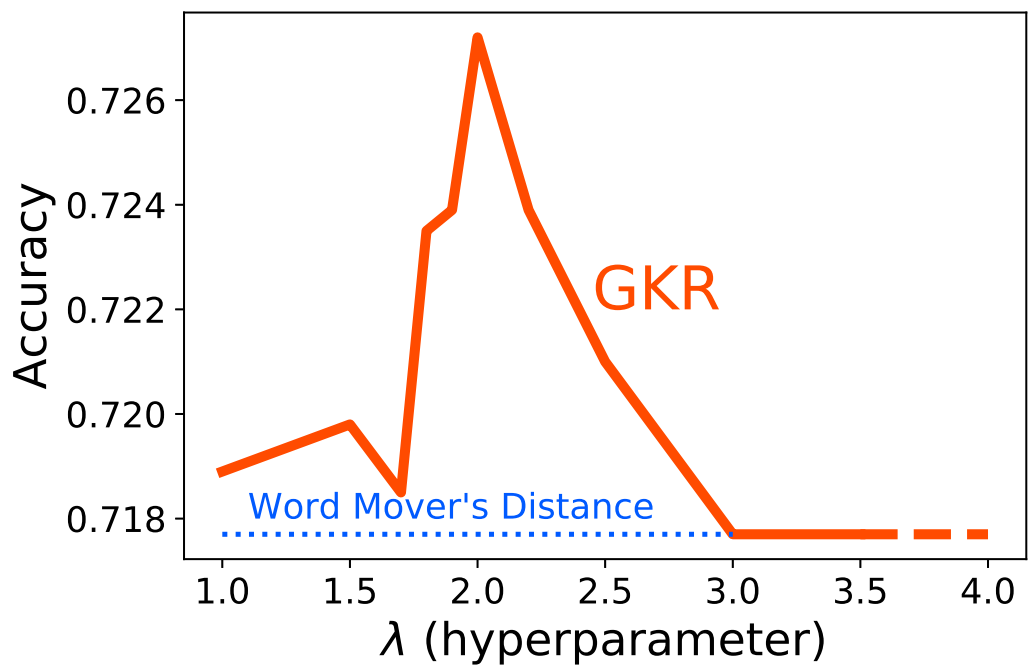}
\caption{$1$-NN classification accuracy for the twitter dataset.}
\label{fig: twitter_classification}
\end{figure}

\end{document}